\definecolor {processblue}{cmyk}{0.96,0,0,0}
\newtheorem{theorem}{Theorem}[section]
\newtheorem{lemma}[theorem]{Lemma}
\newtheorem{corollary}[theorem]{Corollary}
\newtheorem{definition}[theorem]{Definition}
\DeclareMathOperator{\tr}{trace}
\newcommand*\samethanks[1][\value{footnote}]{\footnotemark[#1]}
\title{SpectralNet: Spectral Clustering using Deep\\ Neural Networks}
\author{Uri Shaham\thanks{Equal contribution.}, Kelly Stanton\samethanks[1], Henry Li\samethanks[1] \\
Yale University\\
New Haven, CT, USA \\
\texttt{\{uri.shaham, kelly.stanton, henry.li\}@yale.edu} \\
\AND
Boaz Nadler, Ronen Basri \\
Weizmann Institute of Science\\
Rehovot, Israel \\
\texttt{\{boaz.nadler, ronen.basri\}@gmail.com} \\
\And
Yuval Kluger \\
Yale University\\
New Haven, CT, USA \\
\texttt{yuval.kluger@yale.edu} \\
}
\begin{document}

\maketitle

\begin{abstract}
Spectral clustering is a leading and popular technique in unsupervised data analysis.
Two of its major limitations are scalability and generalization of the spectral embedding (i.e., out-of-sample-extension).
In this paper we introduce a deep learning approach to spectral clustering that overcomes the above shortcomings.
Our network, which we call \textit{SpectralNet}, learns a map that embeds input data points into the eigenspace of their associated graph Laplacian matrix and subsequently clusters them. We train SpectralNet using a procedure that involves constrained stochastic optimization. Stochastic optimization allows it to scale to large datasets, while the constraints, which are implemented using a special-purpose output layer, allow us to keep the network output orthogonal. Moreover, the map learned by SpectralNet naturally generalizes the spectral embedding to unseen data points.
To further improve the quality of the clustering, we replace the standard pairwise Gaussian affinities with affinities leaned from unlabeled data using a Siamese network.
Additional improvement can be achieved by applying the network to code representations produced, e.g., by standard autoencoders. Our end-to-end learning procedure is fully unsupervised.
In addition, we apply VC dimension theory to derive a lower bound on the size of  SpectralNet.
State-of-the-art clustering results are reported on the Reuters dataset.
Our implementation is publicly available at~\url{https://github.com/kstant0725/SpectralNet}.
\end{abstract}


\section {Introduction}

Discovering clusters in unlabeled data is a task of significant scientific and practical value. With technological progress images, texts, and other types of data are acquired in large numbers. Their labeling, however, is often expensive, tedious, or requires expert knowledge. Clustering techniques provide useful tools to analyze such data and to reveal its underlying structure.

Spectral Clustering~\citep{shi2000normalized, ng2002spectral, von2007tutorial} is a leading and highly popular clustering algorithm. It works by embedding the data in the eigenspace of the Laplacian matrix, derived from the pairwise similarities between data points, and applying $k$-means on this representation to obtain the clusters.
Several properties make spectral clustering appealing: First, its embedding optimizes a natural cost function, minimizing pairwise distances between similar data points; moreover, this optimal embedding can be found analytically. Second, spectral clustering variants arise as relaxations of graph balanced-cut problems~\citep{von2007tutorial}.
Third, spectral clustering was shown to outperform other popular clustering algorithms such as $k$-means \citep{von2007tutorial}, arguably due to its ability to handle non-convex clusters.
Finally, spectral clustering has a solid probabilistic interpretation, since the Euclidean distance in the embedding space is equal to a diffusion distance, which, informally, measures the time it takes probability mass to transfer between points, via other points in the dataset~\citep{nadler2006diffusion, coifman2006diffusion}.

\begin{figure}[t!]
  \centering
  \includegraphics[width=1.05in,height=0.72in]{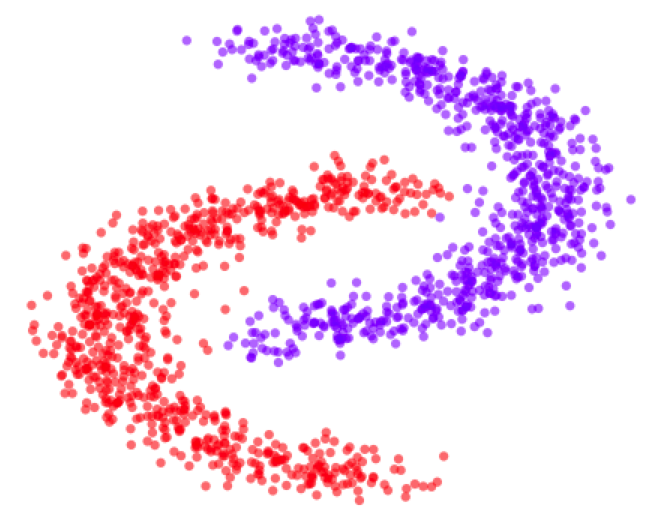}
  \includegraphics[width=1.05in,height=0.72in]{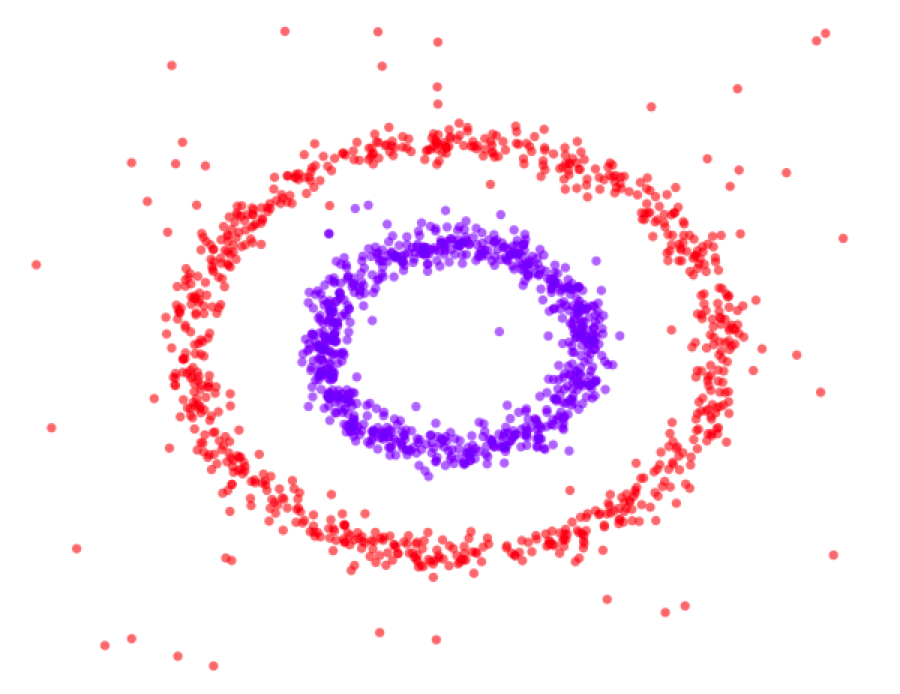}
  \includegraphics[width=1.05in,height=0.72in]{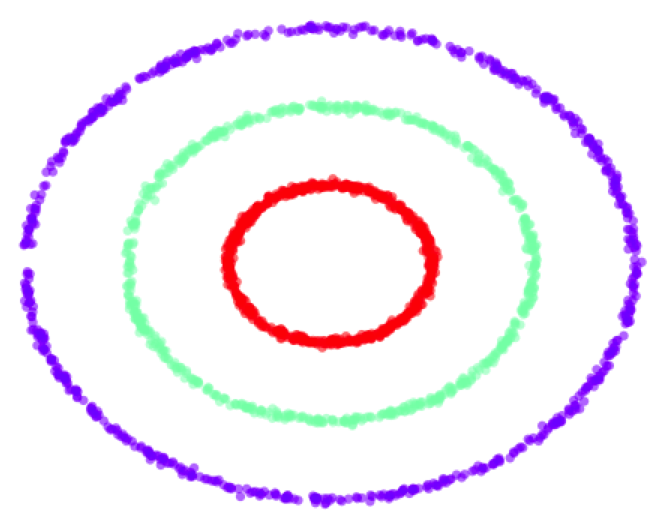}
  \includegraphics[width=1.05in,height=0.72in]{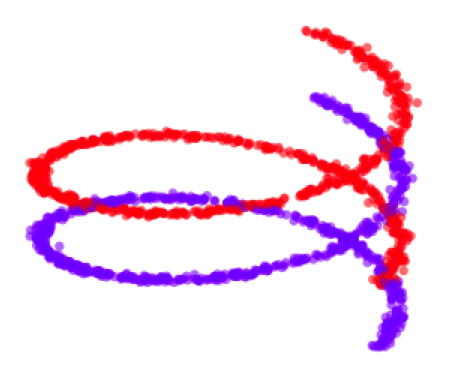}
  \includegraphics[width=1.05in,height=0.72in]{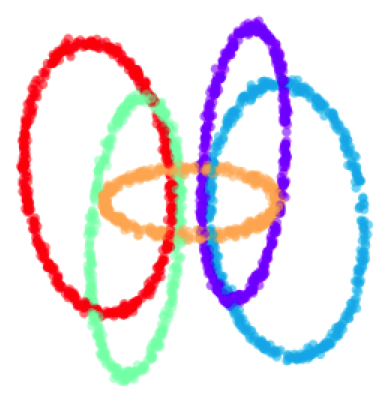}
  \includegraphics[width=1.05in,height=0.72in]{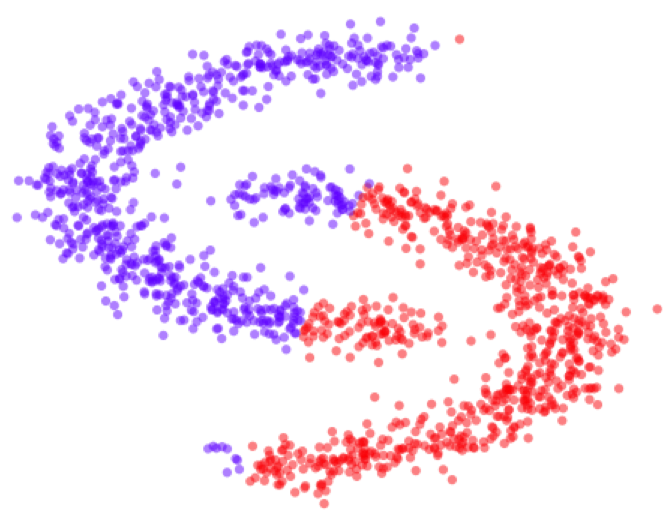}
  \includegraphics[width=1.05in,height=0.72in]{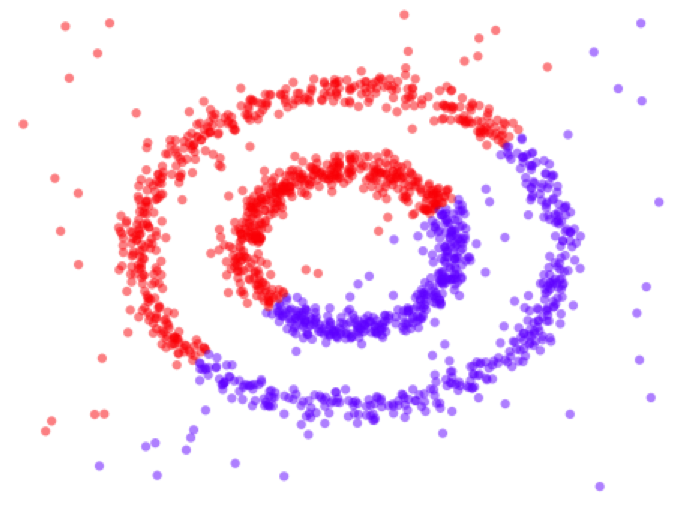}
  \includegraphics[width=1.05in,height=0.72in]{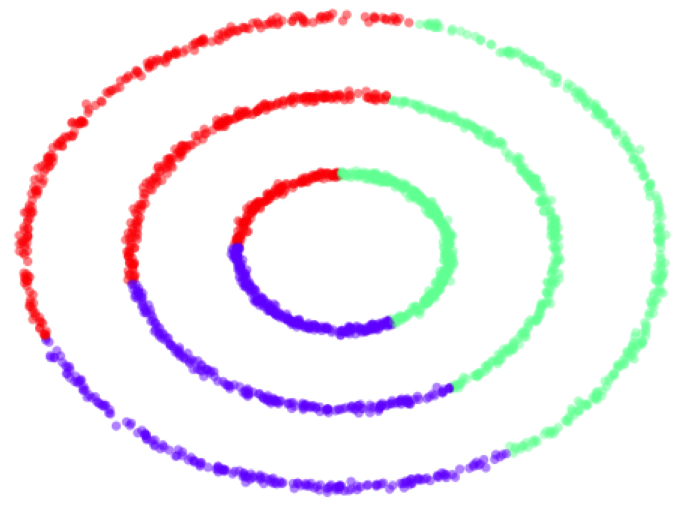}
  \includegraphics[width=1.05in,height=0.72in]{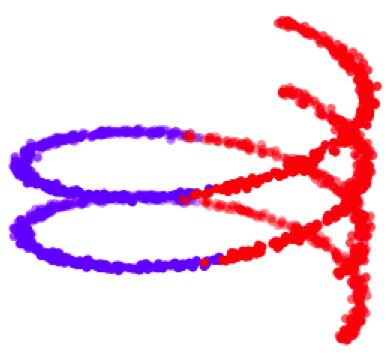}
  \includegraphics[width=1.05in,height=0.72in]{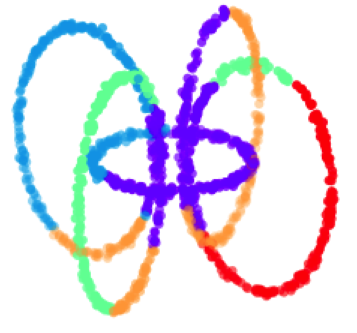}
  \caption{Illustrative 2D and 3D examples showing the results of our SpectralNet clustering (top) compared to typical results obtained with DCN, VaDE, DEPICT and IMSAT (bottom) on simulated datasets in 2D and 3D. Our approach successfully finds these non-convex clusters, whereas the competing algorithms fail on all five examples. (The full set of results for these algorithms is shown in Figure~\ref{fig:2D} in Appendix~\ref{app:2d}.)}
 \label{fig:intro}
 \end{figure}

While spectral embedding of data points can be achieved by a simple eigen-decomposition of their graph Laplacian matrix, with large datasets direct computation of eigenvectors may be prohibitive. Moreover, generalizing a spectral embedding to unseen data points, a task commonly referred to as out-of-sample-extension (OOSE), is a non-trivial task; see, for example, \citep{bengio2004out,fowlkes2004spectral,coifman2006geometric}.

In this work we introduce SpectralNet, a deep learning approach to spectral clustering, which addresses the scalability and OOSE problems pointed above.
Specifically, it is trained in a stochastic fashion, which allows it to scale. Moreover, once trained, it provides a function, implemented as a feed-forward network, that maps each input data point to its spectral embedding coordinates. This map can easily be applied to new test data.
Unlike optimization of standard deep learning models, SpectralNet is trained using constrained optimization, where the constraint (orthogonality of the net outputs) is enforced by adding a linear layer, whose weights are set by the QR decomposition of its inputs.
In addition, as good affinity functions are crucial for the success of spectral clustering, rather than using the common Euclidean distance to compute Gaussian affinity, we show how Siamese networks can be trained from \textit{unlabeled} data to learn pairwise distances and consequently significantly improve the quality of the clustering.
Further improvement can be achieved if our network is applied to transformed data obtained by an autoencoder (AE).
On the theoretical front, we utilize VC-dimension theory to derive a lower bound on the size of neural networks that compute spectral clustering. Our experiments indicate that our network indeed approximates the Laplacian eigenvectors well, allowing the network to cluster challenging non-convex point sets, which recent deep network based methods fail to handle; see examples in Figure~\ref{fig:intro}.
Finally, SpetralNet achieves competitive performance on MNIST handwritten digit dataset and state-of-the-art on the Reuters document dataset, whose size makes standard spectral clustering inapplicable.


\section {Related work}\label{sec:related}

Recent deep learning approaches to clustering largely attempt to learn a code for the input that is amenable to clustering according to either the $k$-means or mixture of gaussians clustering models.  DCN~\citep{yang2016towards} directly optimizes a loss composed of a reconstruction term (for the code) and the $k$-means functional. DEC~\citep{xie2016unsupervised} iteratively updates a target distribution to sharpen cluster associations.
DEPICT~\citep{dizaji2017deep} adds a regularization term that prefers balanced clusters. All three methods are pre-trained as autoencoders, while DEPICT also initializes its target distribution using $k$-means (or other standard clustering algorithms). Several other recent approaches rely on a variational autoencoder that utilizes a Gaussian mixture prior, see, for example, VaDE~\citep{zheng2016variational} and GMVAE~\citep{dilokthanakul2016deep}.
IMSAT~\citep{hu2017learning} is based on data augmentation, where the net is trained to maximize the mutual information between inputs and predicted clusters, while regularizing the net so that the cluster assignment of original data points will be consistent with the assignment of augmented points.
Different approaches are proposed by~\citet{chen2015deep}, who uses a deep belief net followed by non-parametric maximum margin clustering (NMMC), and \citet{yang2016joint}, who introduce a recurrent-agglomerative framework to image clustering.

While these approaches achieve accurate clustering results on standard datasets (such as the MNIST and Reuters), the use of the $k$-means criterion, as well as the Gaussian mixture prior, seems to introduce an implicit bias towards the formation of clusters with convex shapes. This limitation seems to hold even in code space. This bias is demonstrated in Figure~\ref{fig:intro}(bottom), which shows the failure of several of the above approaches on relatively simple clustering tasks.
In contrast, as is indicated in Figure \ref{fig:intro}(top), our SpectralNet approach appears to be less vulnerable to such bias. The full set of runs can be found in Appendix~\ref{app:2d}.

In the context of spectral clustering, \citet{tian2014learning} learn an autoencoder that maps the rows of a graph Laplacian matrix onto the corresponding spectral embedding, and then use $k$-means in code space to cluster the underlying data. Unlike our work, which learns to map an input data point to its spectral embedding, Tian et al.'s network takes as input an \textit{entire row} of the graph Laplacian, and therefore OOSE is impractical, as it involves computing in preprocessing the affinities of each new data point to all the training data. Also of interest is a kernel spectral method by \citet{alzate2010multiway}, which allows for out of sample extension and handles large datasets through smart sampling (but does not use a neural network).

\citet{yi2016syncspeccnn} address the problem of 3D shape segmentation. Their work, which focuses on learning graph convolutions, uses a graph spectral embedding through eigenvector decomposition, which is \textit{not learned}. In addition,  we enforce orthogonalization stochastically through a constraint layer, while they attempt to learn orthogonalized functional maps by adding an orthogonalization term to the loss function, which involves non-trivial balancing between two loss components.

Other deep learning works use a spectral approach in the context of  \textit{supervised} learning.~\citet{mishne2017diffusion} trained a network to map graph Laplacian matrices to their eigenvectors using supervised regression. Their approach, however, requires the true eigenvectors for training, and hence does not easily scale to large datasets. 
\citet{law2017deep} apply supervised metric learning, showing that their method approximates the eigenvectors of a 0-1 affinity matrix constructed from the true labels. 
A related approach is taken by~\citet{hershey2016deep}, where their net learn to embedding of the data on which the dot product affinity is similar to the affinity obtained from the true labels. 

Finally, a number of papers showed that stochastic gradient descent can be used effectively to compute eigen-decomposition. 
\citet{han2017mini} apply this for spectral clustering. Unlike SpectralNet, however, their method does not compute the eigenvector embedding as functions of the data, and so out-of-sample extension is not possible. 
\citet{shamir2015stochastic} uses SGD to compute the principal components of covariance matrices (see also references therein). Their setup assumes that in each iteration a noisy estimate of the entire input matrix is provided. In contrast, in our work we use in each iteration only a small submatrix of the affinity matrix, corresponding to a small minibatch. In future work, we plan to examine how these algorithms can be adapted to improve the convergence rate of our proposed network.


\section {SpectralNet}\label{sec:spectralNet}

In this section we present our proposed approach, describe its key components, and explain its connection to spectral clustering.
Consider the following standard clustering setup: Let $\mathcal{X}=\{x_1,\ldots,x_n\} \subseteq \mathbb{R}^d$ denote a collection of \textit{unlabeled} data points drawn from some unknown distribution $\mathcal{D}$; given a target number of clusters \(k\) and a distance measure between points, the goal is to learn a similarity measure between points in $\mathcal{X}$ and use it to learn a map that assigns each of  $x_1,\ldots,x_n$ to one of $k$ possible clusters, so that similar points tend to be grouped in the same cluster.
As in classification tasks we further aim to use the learned map to determine the cluster assignments of new, yet unseen, points drawn from $\mathcal{D}$. Such \textit{out-of-sample-extension} is based solely on the learned map, and requires neither computation of similarities between the new points and the training points nor re-clustering of combined data.

In this work we propose \textit{SpectralNet}, a neural network approach for spectral clustering.
Once trained, SpectralNet computes a map
\(F_{\theta}:\mathbb{R}^d\to \mathbb{R}^k$
and a cluster assignment function $c:\mathbb{R}^k\to \{1,\ldots,k\}\). It maps each input point \(x\) to an output \(y=F_{\theta}(x)\) and provides its cluster assignment \(c(y)\). The spectral map $F_\theta$ is implemented using a neural network, and the parameter vector $\theta$ denotes the network weights.

The training of SpectralNet consists of three components: (i) unsupervised learning of an affinity given the input distance measure, via a Siamese network (see Section~\ref{sec:siamese}); (ii) unsupervised learning of the map \(F_{\theta}\) by optimizing a spectral clustering objective while enforcing orthogonality (see Section~\ref{sec:spectralMap}); (iii) learning the cluster assignments, by k-means clustering in the embedded space.


\subsection{Learning the Spectral Map \(F_\theta\)}
\label{sec:spectralMap}

In this section we describe the main learning step in SpectralNet, component (ii) above.
To this end, let \mbox{$w:\mathbb{R}^d\times\mathbb{R}^d \rightarrow [0,\infty)$} be a symmetric affinity function, such that $w(x,x')$ expresses the similarity between $x$ and $x'$.
Given $w$, we would like points $x,x'$ which are similar to each other (i.e., with large $w(x,x')$) to be embedded close to each other. Hence, we define the loss
\begin{equation}
\mathcal{L}_\text{SpectralNet}(\theta) = \mathbb{E}\left[w({x,x'})\|y-y'\|^2\right],
\label{eq:spectralNetLoss_pop}
\end{equation}
where \(y,y' \in \mathbb{R}^k\), the expectation is taken with respect to  pairs of i.i.d.\ elements $(x,x')$ drawn from $\mathcal{D}$, and  $\theta$ denotes the parameters of the map \(y=F_{\theta}(x).\)  Clearly, the loss $\mathcal{L}_\text{SpectralNet}(\theta)$ can be minimized by mapping all points to the same output vector ($F_\theta(x) = y_0$ for all $x$).
To prevent this, we require that the outputs will be orthonormal in expectation with respect to $\mathcal{D}$, i.e.,
\begin{equation}
\mathbb{E}\left[yy^T \right]=I_{k\times k}.
\label{eq:ortho_pop}
\end{equation}
As the distribution $\mathcal{D}$ is unknown, we replace the expectations in~\eqref{eq:spectralNetLoss_pop} and~\eqref{eq:ortho_pop} by their empirical analogues.
Furthermore, we perform the optimization in a stochastic fashion.
Specifically, at each iteration we randomly sample a  minibatch of $m$ samples, which without loss of generality we denote $x_1,\ldots,x_m \in \mathcal{X}$, and organize them in an $m \times d$ matrix $X$ whose $i$th row contains $x_i^T$. We then minimize the loss
\begin{equation}
L_\text{SpectralNet}(\theta) = \frac{1}{m^2}\sum_{i,j=1}^m W_{i,j}\|y_i-y_j\|^2,
\label{eq:spectralNetLoss}
\end{equation}
where $y_i=F_\theta(x_i)$ and $W$ is a $m \times m$ matrix such that $W_{i,j} = w(x_i,x_j)$.
The analogue of~\eqref{eq:ortho_pop} for a small minibatch is
\begin{equation}
\frac{1}{m}Y^TY = I_{k\times k},
\label{eq:ortho}
\end{equation}
where $Y$ is a $m\times k$ matrix of the outputs whose $i$th row is $y_i^T$.

We implement the map $F_\theta$ as a general neural network whose last layer enforces the orthogonality constraint \eqref{eq:ortho}. This layer gets input from $k$ units, and acts as a linear layer with $k$ outputs, where the weights are set to orthogonalize the output $Y$ for the minibatch $X$. Let  $\tilde{Y}$ denote the $m \times k$ matrix containing the inputs to this layer for $X$ (i.e., the outputs of $F_\theta$ over the minibatch before orthogonalization), a linear map that orthogonalizes the columns of $\tilde Y$ is computed through its QR decomposition.
Specifically, for any matrix $A$ such that $A^TA$ is full rank, one can obtain the QR decomposition via the Cholesky decomposition $A^TA = LL^T$, where $L$ is a lower triangular matrix, and then setting $Q = A\left(L^{-1}\right)^T$. This is verified in Appendix~\ref{app:cholesky}.
Therefore, in order to orthogonalize $\tilde{Y}$, the last layer multiplies $\tilde{Y}$ from the right by $\sqrt{m}\left(\tilde{L}^{-1}\right)^T$,  where $\tilde{L}$ is obtained from the Cholesky decomposition of $\tilde{Y}$ and the $\sqrt{m}$ factor is needed to satisfy~\eqref{eq:ortho}.

We train this spectral map in a coordinate descent fashion, where we alternate between orthogonalization and gradient steps. Each of these steps uses a different minibatch (possibly of different sizes), sampled uniformly from the training set $\mathcal{X}$. In each orthogonalization step we use the QR decomposition to tune the weights of the last layer.
In each gradient step we tune the remaining weights using standard backpropagation.
Once SpectralNet is trained, all the weights are freezed, including those of the last layer, which simply acts as a linear layer.
Finally, to obtain the cluster assignments $c_1,\ldots, c_n$, we propagate $x_1,\ldots, x_n$ through it to obtain the embeddings $y_1,\ldots,y_n \in \mathbb{R}^k$, and perform $k$-means on them, obtaining $k$ cluster centers, as in standard spectral clustering. These algorithmic steps are summarized below in Algorithm~\ref{algo:spectralNet} in Sec.~\ref{sec:algorithm}.


\noindent \textbf{Connection with Spectral Clustering} \quad
The loss~\eqref{eq:spectralNetLoss} can also be written as
\begin{equation}
L_\text{SpectralNet}(\theta) = \frac{2}{m^2}\tr \left(Y^T(D-W)Y\right),\notag
\end{equation}
where $D$ is a $m \times m$ diagonal matrix such that $D_{i,i} = \sum_j W_{i,j}$. The symmetric, positive semi-definite matrix $D-W$ forms the (unnormalized) graph Laplacian of the minibatch $x_1,\ldots, x_m$.
For $k=1$ the loss is minimized when $y$ is the eigenvector of $D-W$ corresponding to the smallest eigenvalue. Similarly, for general $k$, under the constraint~\eqref{eq:ortho}, the minimum is attained when the column space of $Y$ is the subspace of the $k$ eigenvectors corresponding to the smallest $k$ eigenvalues of $D-W$. Note that this subspace includes the constant vector whose inclusion does not affect the final cluster assignment.

Hence, SpectralNet approximates spectral clustering, where the main differences are that the training is done in a stochastic fashion, and that the orthogonality constraint with respect to the \textit{full} dataset $\mathcal{X}$ holds only approximately.
SpectralNet therefore trades accuracy with scalability and generalization ability.
Specifically, while its outputs are an approximation of the true eigenvectors, the stochastic training enables its scalability and thus allows one to cluster large datasets that are prohibitive for standard spectral clustering.
Moreover, once trained, SpectralNet provides a parametric function whose image for the training points is (approximately) the eigenvectors of the graph Laplacian. This function can now naturally embed new test points, which were not present at training time.
Our experiments with the MNIST dataset (Section~\ref{sec:experiments}) indicate that the outputs of SpectralNet closely approximate the true eigenvectors.

Finally, as in common spectral clustering applications, cluster assignments are determined by applying $k$-means to the embeddings $y_1,\ldots y_n$. We note that the $k$-means step can be replaced by other clustering algorithms. Our preference to use $k$-means is based on the interpretation (for normalized Laplacian matrices) of the Euclidean distance in the embedding space as diffusion distance in the input space~\citep{nadler2006diffusion, coifman2006diffusion}.


\noindent \textbf{Normalized graph Laplacian} \quad
In spectral clustering, the symmetric normalized graph Laplacian $I-D^{-\frac{1}{2}}WD^{-\frac{1}{2}}$ can use as an alternative to the unnormalized Laplacian $D-W$. In order to train SpectralNet with normalized graph Laplacian, the loss function~\eqref{eq:spectralNetLoss} should be replaced by
\begin{equation}
L_\text{SpectralNet}(\theta) = \frac{1}{m^2}\sum_{i,j=1}^m W_{i,j}\left\|\frac{y_i}{d_i}-\frac{y_j}{d_j}\right\|^2,
\label{eq:spectralNetLoss_norm}
\end{equation}
where $d_i = D_{i,i} = \sum_{j=1}^m W_{i,j}$.


\noindent \textbf{Batch size considerations} \quad
In standard classification or regression loss functions, the loss is a sum over the losses of individual examples.
In contrast, SpectralNet loss~\eqref{eq:spectralNetLoss} is summed over pairs of points, and each summand describes relationships between data points. This relation is encoded by the full $n \times n$ affinity matrix $W_\text{full}$ (which we never compute explicitly).
The minibatch size $m$ should therefore be sufficiently large to capture the structure of the data.
For this reason, it is also highly important that minibatches will be sampled at random from the entire dataset at each iteration step, and not be fixed across epochs. When the minibatches are fixed, the knowledge of $W_\text{full}$ is reduced to a (possibly permuted) diagonal sequence of $m \times m$ blocks, thus ignoring many of the entries of $W_\text{full}$.
In addition, while the output layer orthogonalizes $\tilde{Y}$, we do not have any guarantees on how well it orthogonalizes other random minibatches. However, in our experiments we observed that if $m$ is large enough, it approximately orthogonalizes other batches as well, and its weights stabilize as training progresses.
Therefore, to train SpectralNet, we use larger minibatches compared to common choices made by practitioners in the context of classification. In our experiments we uses minibatches of size 1024 for MNIST and 2048 for Reuters, re-sampled randomly at every iteration step.


\subsection {Learning affinities using a Siamese network}
\label{sec:siamese}

Choosing a good affinity measure is crucial to the success of spectral clustering.
In many applications, practitioners use an affinity measure that is positive for a set of nearest neighbor pairs, combined with a Gaussian kernel with some scale $\sigma > 0$, e.g.,
\begin{equation}
W_{i,j} = \begin{cases}
\exp\left(-\frac{\|x_i-x_j\|^2}{2\sigma^2}\right),\; &\text{$x_j$ is among the nearest neighbors of $x_i$}\\
0,\; &\text{otherwise,}
\end{cases}
\label{eq:gaussian}
\end{equation}
where one typically symmetrizes $W$, for example, by setting $W_{i,j}\leftarrow (W_{i,j}+W_{j,i})/2$.

Euclidean distance may be overly simplistic measure of similarity; seeking methods that can capture more complex similarity relations might turn out advantageous.
Siamese nets~\citep{hadsell2006dimensionality, shaham2018learning} are trained to learn affinity relations between data points; we empirically found that the \textit{unsupervised} application of a Siamese net to determine the distances often improves the quality of the clustering.

Siamese nets are typically trained on a collection of similar (positive) and dissimilar (negative) pairs of data points. When labeled data are available, such pairs can be chosen based on label information (i.e., pairs of points with the same label are considered positive, while pairs of points with different labels are considered negative). Here we focus on datasets that are unlabeled.
In this case we can learn the affinities directly from Euclidean proximity or from graph distance, e.g., by ``labeling'' points $x_i,x_j$ positive if $\|x_i-x_j\|$ is small and negative otherwise.
In our experiments, we construct positive pairs from the nearest neighbors of each point. Negative pairs are constructed from points with larger distances. This Siamese network, therefore, is trained to learn an adaptive nearest neighbor metric.

A Siamese net maps every data point $x_i$ into an embedding $z_i = G_{\theta_\text{siamese}}(x_i)$ in some space. The net is typically trained to minimize contrastive loss, defined as
\begin{equation}
L_\text{siamese}(\theta_\text{siamese}; x_i,x_j) = \begin{cases}
\|z_i-z_j\|^2,\; &\text{$(x_i,x_j)$ is a positive pair}\notag\\
\max\left(c-\|z_i-z_j\|,0\right)^2,\; &\text{$(x_i,x_j)$ is a negative pair},\notag
\end{cases}
\end{equation}
where $c$ is a margin (typically set to 1).

Once the Siamese net is trained, we use it to define a batch affinity matrix $W$ for the training of SpectralNet, by replacing the Euclidean distance $\|x_i-x_j\|$ in~\eqref{eq:gaussian} with $\|z_i-z_j\|$.

Remarkably, despite being trained in an unsupervised fashion on a training set constructed from relatively naive nearest neighbor relations, in Section~\ref{sec:experiments} we show that affinities that use the Siamese distances yield dramatically improved clustering quality over affinities that use Euclidean distances. This implies that unsupervised training of Siamese nets can lead to learning useful and rich affinity relations.


\subsection {Algorithm}  \label{sec:algorithm}
Our end-to-end training approach is summarized in Algorithm~\ref{algo:spectralNet}.

\begin{algorithm}[H]
 {\bf Input}{: $\mathcal{X} \subseteq \mathbb{R}^d$, number of clusters $k$, batch size $m$}\;
 {\bf Output}: embeddings $y_1,\ldots, y_n,\; y_i \in \mathbb{R}^k$, cluster assignments $c_1,\ldots c_n,\; c_i \in \{1,\ldots k\}$
 Construct a training set of positive and negative pairs for the Siamese network\;
 Train a Siamese network\;
 Randomly initialize the network weights $\theta$\;
 \While{$L_\text{SpectralNet}(\theta)$ not converged}{
  {\bf Orthogonalization step:}\\
  Sample a random minibatch $X$ of size $m$\;
  Forward propagate $X$ and compute inputs to orthogonalization layer $\tilde{Y}$\;
  Compute the Cholesky factorization $LL^T = \tilde{Y}^T\tilde{Y}$\;
  Set the weights of the orthogonalization layer to be $\sqrt{m}\left(L^{-1}\right)^T$\;
  {\bf Gradient step:}\\
  Sample a random minibatch $x_1,\ldots, x_m$\;
  Compute the $m\times m$ affinity matrix $W$ using the Siamese net\;
  Forward propagate $x_1,\ldots, x_m$ to get $y_1,\ldots, y_m$\;
  Compute the loss~\eqref{eq:spectralNetLoss} or~\eqref{eq:spectralNetLoss_norm}\;
  Use the gradient of $L_\text{SpectralNet}(\theta)$ to tune all $F_\theta$ weights, except those of the
  output layer\;
 }
 Forward propagate $x_1,\ldots, x_n$ and obtain $F_\theta$ outputs $y_1,\ldots, y_n$\;
 Run $k$-means on $y_1,\ldots, y_n$ to determine cluster centers\;

 \caption{SpectralNet training}
 \label{algo:spectralNet}
\end{algorithm}

Once SpectralNet is trained, computing the embeddings of new test points (i.e., out-of-sample-extension) and their cluster assignments is straightforward: we simply propagate each test point $x_i$ through the network $F_\theta$ to obtain their embeddings $y_i$, and assign the point to its nearest centroid, where the centroids were computed using $k$-means on the training data, at the last line of Algorithm~\ref{algo:spectralNet}.

\subsection{Spectral clustering in code space}
Given a dataset $\mathcal{X}$, one can either apply SpectralNet in the original input space, or in a code space (obtained, for example, by an autoencoder). A code space representation is typically lower dimensional, and often contains less nuisance information (i.e., information on which an appropriate similarity measure should not depend).
Following~\citep{ yang2016towards, xie2016unsupervised, zheng2016variational} and others, we empirically observed that SpectralNet performs best in code space. Unlike these works, which use an autoencoder as an initialization for their clustering networks,  we use the code as our data representation and apply SpectralNet directly in that space, (i.e., we do not change the code space while training SpectralNet).
In our experiments, we use code spaces obtained from publicly available autoencoders trained by~\citet{zheng2016variational} on the MNIST and Reuters datasets.


\section {Theoretical analysis}\label{sec:analysis}
Our proposed SpectralNet not only determines cluster assignments in training, as clustering algorithms commonly do, but it also produces a map that can generalize to unseen data points at test time.
Given a training set with $n$ points, it is thus natural to ask for  how large should such a network be to represent this spectral map.
T he theory of VC-dimension can provide useful worst-case bounds for this size.

In this section, we use the VC dimension theory to study the minimal size a neural network should have in order to compute spectral clustering for $k=2$.
Specifically, we consider the class of functions that map all training points to binary values, determined by thresholding at zero the eigenvector of the graph Laplacian with the second smallest eigenvalue. We denote this class of binary classifiers $\mathcal{F}^\text{spectral clustering}_n$. Note that with $k=2$, $k$-means can be replaced by thresholding of the second smallest eigenvector, albeit not necessarily at zero. We are interested in the minimal number of weights and neurons required to allow the net to compute such functions, assuming the affinities decay exponentially with the Euclidean distance.
We do so by studying the VC dimension of function classes obtained by performing spectral clustering on $n$ points in arbitrary Euclidean spaces $\mathbb{R}^d$, with $d\ge 3$.
We will make no assumption on the underlying distribution of the points.

In the main result of this section, we prove a lower bound on the VC dimension of spectral clustering, which is linear in the number of points $n$.
In contrast, the VC dimension of $k$-means, for example, depends solely on the dimension $d$, but not on $n$, hence making $k$-means significantly less expressive than spectral clustering\footnote{For two clusters in $\mathbb{R}^d$, $k$-means clustering partitions the data using a linear separation. It is well known that the VC dimension of the class of linear classifiers in $\mathbb{R}^d$ is $d+1$. Hence, $k$-means can shatter at most $d+1$ points in $\mathbb{R}^d$, regardless of the size $n$ of the dataset. }.
As a result of our main theorem, we bound from below the number of weights and neurons in any net that is required to compute Laplacian eigenvectors.
The reader might find the analysis in this section interesting in its own right.


%
%
%

Our main result shows that for data in $\mathbb{R}^d$ with $d \ge 3$, the VC dimension of $\mathcal{F}^\text{spectral clustering}_n$ is linear in the number $n$ of points, making spectral clustering almost as rich as arbitrary clustering of the $n$ points.
\begin{theorem} \label{thm:vc}
$\text{VC dim}(\mathcal{F}^\text{spectral clustering}_n) \ge\frac{1}{10}n$.
\end{theorem}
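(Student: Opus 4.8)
The plan is to lower-bound the VC dimension by exhibiting an explicit shatterable set of $m=\lceil n/10\rceil$ points. For $k=2$ the induced classifier labels each data point by the sign of the Fiedler vector (the eigenvector of $L=D-W$ with the second-smallest eigenvalue). To shatter $m$ fixed points $p_1,\dots,p_m$, I would show that for \emph{every} target bipartition $(S,\bar S)$ of them there is an $n$-point configuration---the $p_i$ at their fixed locations together with auxiliary ``helper'' points whose placement depends on the target---whose Fiedler vector, thresholded at $0$, separates exactly $S$ from $\bar S$. This reduces the theorem to a constructive problem: realize, for each $(S,\bar S)$, a point set that forms (approximately) two internally connected clusters, one containing $S$ and one containing $\bar S$, with negligible affinity between them.

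For the construction I would place the shatter points on a line inside a $2$-dimensional plane of $\mathbb{R}^d$ (this is where $d\ge3$ enters) at spacing $\Theta(s)$, and fix the Gaussian scale $\sigma$ small relative to $s$. Given a target $(S,\bar S)$, I connect the points of $S$ into one cluster by a chain of helpers routed \emph{above} the plane with consecutive hops of length $\le s$, and connect $\bar S$ by a chain routed \emph{below} the plane. Consecutive points within a chain then have affinity at least $\alpha=\exp(-s^2/2\sigma^2)$, while any two points in different clusters are at distance $\ge\Delta>s$ and so have affinity at most $\beta=\exp(-\Delta^2/2\sigma^2)\ll\alpha$. The third dimension is exactly what lets the two chains reach all their points while staying far from each other and from the opposite-labeled points, so that only the intended consecutive pairs carry non-negligible weight; in the plane alone the chains could not avoid crossing or approaching opposite-cluster points. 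A careful accounting keeps the helper count at a constant multiple of $m$ (optimizing the routing gives the factor yielding $\tfrac1{10}$), so the total is at most $10m\le n$, with any leftover points padded onto one cluster.

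To certify that the Fiedler threshold separates the two clusters, I would use eigenvector perturbation. Write $L=L_0+E$, where $L_0$ is the block-diagonal Laplacian obtained by deleting all cross-cluster affinities and $E$ collects them, so $\|E\|\le 2N\beta$ with $N\le n$. Since each cluster is connected, $L_0$ has a two-dimensional kernel spanned by $\mathbf{1}$ and by $v_0=|\bar S\text{-cluster}|\,\mathbf{1}_{S\text{-cluster}}-|S\text{-cluster}|\,\mathbf{1}_{\bar S\text{-cluster}}$, whose sign pattern is exactly the target bipartition, and its next eigenvalue satisfies $\lambda_3(L_0)\ge c\,\alpha/\ell^2>0$ for chain length $\ell=O(m)$. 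Because $L\mathbf{1}=0$ \emph{exactly} (row sums vanish), the Fiedler vector is orthogonal to $\mathbf{1}$; by Weyl's inequality the bottom two eigenvalues of $L$ stay below $\|E\|$ while $\lambda_3$ stays near $\lambda_3(L_0)$, and by Davis--Kahan the Fiedler vector is within $O(\|E\|/\lambda_3(L_0))\lesssim N m^2\beta/\alpha$ (in $\ell_2$) of $\pm v_0$. Taking $\sigma$ small drives this below the $\Theta(1/N)$ smallest-entry scale of the normalized $v_0$, so all signs are preserved and the induced labeling equals the target, up to a global flip that the thresholding convention (or swapping the two chains) fixes. Hence all $2^m$ labelings are realized and the $m$ points are shattered.

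I expect the spectral-gap control to be the crux. The connecting chains leave each cluster only weakly connected---algebraic connectivity $\Theta(1/\ell^2)$ for a path of length $\ell=O(m)$---so the argument hinges on the fact that the cross affinities, though never exactly zero for a Gaussian kernel, are \emph{exponentially} small and can be pushed below $1/\mathrm{poly}(n)$ by shrinking $\sigma$, thereby dominating the weak gap in Davis--Kahan down to the $\Theta(1/N)$ scale of $v_0$. The secondary obstacle, and the reason for the hypothesis $d\ge3$, is the geometric routing: one must place the helper chains so that precisely the consecutive pairs are near-neighbors while opposite-cluster and non-consecutive pairs remain far, and simultaneously keep the helper budget at a constant per shatter point so as to obtain the explicit constant $1/10$.
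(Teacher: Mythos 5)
Your proposal is correct, and its geometric half is essentially the paper's own construction: the paper (Lemma~\ref{lem:a-b}) also shatters $m=n/10$ points---placed on a planar grid rather than a line---by adding, per dichotomy, helper points routed through the third dimension (copies of the $S$- and $T$-points lifted to $Z=\pm 1$, midpoints, and equally spaced points along minimal spanning trees), so that within-cluster points are joined by paths with hops of length at most $b<1$ while all cross-cluster distances are at least $1$; shrinking the Gaussian scale then makes $\alpha/\beta=\exp\bigl((1-b^2)/2\sigma^2\bigr)$ arbitrarily large, exactly your $(\alpha,\beta)$ budget, and the routing details you leave vague (e.g., your line spacing must strictly exceed the hop length, and the two chains must stay far from opposite-labeled points while dipping down to touch their own) are precisely what that lemma pins down. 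Where you genuinely diverge is in certifying that the thresholded Fiedler vector respects the partition. You decompose $L=L_0+E$, bound $\lambda_3(L_0)\ge c\alpha/\ell^2$ via the spanning chain, and run Weyl plus Davis--Kahan to place $y^*$ within $\ell_2$-distance $O(Nm^2\beta/\alpha)$ of $\pm v_0$, winning entrywise because $\beta/\alpha$ is exponentially small (your ``$\Theta(1/N)$ entry scale'' is conservative, since the entries of normalized $v_0$ are $\Theta(1/\sqrt{N})$, which only helps you). The paper instead argues variationally and elementarily: Lemma~\ref{lem:ineq} lower-bounds $f(y^*)$ by $2\alpha\Delta^2/n$ via a telescoping sum along the within-cluster paths and upper-bounds it by $n\beta$ using the test vector $\bar y\propto \mathbf{1}_S-\mathbf{1}_T$ (your $v_0$), giving $\Delta^2\le n^2\beta/(2\alpha)$ for the within-cluster spread $\Delta$; Lemma~\ref{lem:sep} then shows by a short mean-and-norm computation that $\Delta<1/\sqrt{2n}$ forces the two clusters onto opposite signs. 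The content is parallel---your algebraic-connectivity estimate for the chain is exactly what the telescoping inequality proves by hand---but the paper's route needs no eigengap statement or eigenvector-perturbation machinery and yields explicit constants, while yours buys a somewhat stronger conclusion (the Fiedler vector is $\ell_2$-close to the cluster indicator, not merely sign-consistent with it) using standard, reusable tools. Both arguments handle the global sign flip and the factor-of-$10$ helper budget in the same way.
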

The formal proof of Theorem~\ref{thm:vc} is deferred to Appendix~\ref{app:proofs}. Below we informally sketch its principles.
We want to show that for any integer $n$ (assuming for simplicity that $n$ is divisible by 10), there exists a set of $m=n/10$ points in $\mathbb{R}^d$ that is shattered by $\mathcal{F}^\text{spectral clustering}_n$.
In particular, we show this for the set of $m$ points placed in a 2-dimensional grid in $\mathbb{R}^d$. We then show that for any arbitrary dichotomy of these $m$ points, we can augment the set of points to a larger set $X$, containing $n=10m$ points, with a balanced partition of $X$ into two disjoint sets $S$ and $T$ that respects the dichotomy of the original $m$ points. The larger set has the special properties: (1) within $S$ (and resp. $T$), there is a path between any two points such that the distances between all pairs of consecutive points along the path are small, and (2) all pairs $(s,t) \in S \times T$ are far apart.
We complete the proof by constructing a Gaussian affinity $W$ with a suitable value of $\sigma$ and showing that the minimizer of the spectral clustering loss for $(X,W)$ (i.e., the second eigenvector of the Laplacian), when thresholded at 0, separates $S$ from $T$, and hence respects the original dichotomy.

By connecting Theorem~\ref{thm:vc} with known results regarding the VC dimension of neural nets, see, e.g.,~\citep{shalev2014understanding}, we can bound the size from below (in terms of number of weights and neurons) of any neural net that computes spectral clustering. This is formalized in the following corollary.

\begin{corollary}\label{cor:vc}~\\[-0.5cm]
\begin{enumerate}
\item For the class of neural nets with $|v|$ sigmoid nodes and $|w|$ weights to  represent all functions realizable by spectral clustering (i.e., second eigenvector of the Laplacian, thresholded at 0) on $n$ points, it is necessary to have $|w|^2|v|^2 \ge O(n)$.
\item for the class of neural nets with $|w|$ weights from a finite family (e.g., single-precision weights) to  represent all functions realizable by spectral clustering, it is necessary to have $|w| \ge O(n)$.
\end{enumerate}
\end{corollary}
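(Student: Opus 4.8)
The plan is to combine the lower bound on $\text{VCdim}(\mathcal{F}^\text{spectral clustering}_n)$ from Theorem~\ref{thm:vc} with standard upper bounds on the VC dimension of neural networks expressed in terms of their numbers of weights and nodes. The underlying logic reverses the usual direction of such arguments: a \emph{lower} bound on the expressive power of the target function class, paired with an \emph{upper} bound on the VC dimension of a net as a function of its size, forces the net to be large.

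First I would make precise the hypothesis ``represent all functions realizable by spectral clustering.'' Letting $\mathcal{H}$ denote the class of binary classifiers computable by the given architecture (either $|v|$ sigmoid nodes with $|w|$ weights, or $|w|$ weights drawn from a finite family), the assumption that $\mathcal{H}$ realizes every dichotomy obtained by thresholding the second Laplacian eigenvector amounts to the inclusion $\mathcal{F}^\text{spectral clustering}_n \subseteq \mathcal{H}$ over the relevant $n$-point domain. By monotonicity of VC dimension under class inclusion, this yields $\text{VCdim}(\mathcal{H}) \ge \text{VCdim}(\mathcal{F}^\text{spectral clustering}_n) \ge \tfrac{1}{10}n$, the last step being Theorem~\ref{thm:vc}. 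I would then invoke the known upper bounds (see, e.g., \citep{shalev2014understanding}): for sigmoid nets, $\text{VCdim}(\mathcal{H}) = O(|w|^2 |v|^2)$, and for nets whose $|w|$ weights take finitely many values, $\text{VCdim}(\mathcal{H}) = O(|w|)$ (the latter following since such a class contains at most $(\text{family size})^{|w|}$ distinct functions). Substituting $\text{VCdim}(\mathcal{H}) \ge \tfrac{1}{10}n$ into each upper bound and rearranging gives $|w|^2|v|^2 \ge \Omega(n)$ and $|w| \ge \Omega(n)$ respectively, which are precisely the two assertions of the corollary.

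The main obstacle here is definitional rather than computational: one must verify that realizing every spectral-clustering dichotomy genuinely forces the inclusion $\mathcal{F}^\text{spectral clustering}_n \subseteq \mathcal{H}$, so that monotonicity of VC dimension applies, and one must cite the correct form of each neural-net VC-dimension upper bound, since these depend on the activation function and on whether the weights are continuous or quantized. Once the appropriate bounds are fixed, the remaining arithmetic is immediate.
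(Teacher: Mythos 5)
Your proposal is correct and follows essentially the same route as the paper's own proof: both combine the $\Omega(n)$ shattering lower bound of Theorem~\ref{thm:vc} with the $O(|w|^2|v|^2)$ (sigmoid) and $O(|w|)$ (finite weight family) VC-dimension upper bounds cited from \citep{shalev2014understanding}. The only difference is presentational --- you phrase the key step as monotonicity of VC dimension under the inclusion $\mathcal{F}^\text{spectral clustering}_n \subseteq \mathcal{H}$, whereas the paper states the contrapositive (a net with $|w|^2|v|^2 < O(n)$ cannot shatter the $O(n)$ points that spectral clustering does shatter) --- and these are the same argument.
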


\begin{proof}~\\[-0.6cm]
\begin{enumerate}
\item The VC dimension of the class of neural nets with $|v|$ sigmoid units and $|w|$ weights is at most $O(|w|^2|v|^2)$~(see \citep{shalev2014understanding}, p.~275). Hence, if $|w|^2|v|^2 < O(n)$, such net cannot shatter any collection of points of size $O(n)$.
From Theorem~\ref{thm:vc}, $\mathcal{F}^\text{spectral clustering}_n$ shatters at least  $O(n)$ points.
Therefore, in order for a class of networks to be able to express any function that can be computed using spectral clustering, it is a necessary (but not sufficient) condition to satisfy $|w|^2|v|^2\ge O(n)$.
\item The VC dimension of the class of neural nets with $|w|$ weights from a finite family is $O(w)$ (see \citep{shalev2014understanding} p.~276). The arguments above imply that $|w| \ge O(n)$.
\end{enumerate}
\end{proof}

Corollary~\ref{cor:vc} implies that in the general case (i.e., without assuming any structure on the $n$ data points), to perform spectral clustering, the size of the net has to grow with $n$.
However, when the data has some geometric structure, the net size can be much smaller.
Indeed, in a related result, the ability of neural networks to learn eigenvectors of Laplacian matrices was demonstrated both empirically and theoretically by~\citet{mishne2017diffusion}.
They proved that there exist networks which approximate the eigenfunctions of manifold Laplacians arbitrarily well (where the size of the network depends on the desired error and the parameters of the manifold, but not on $n$).


\section {Experimental results}\label{sec:experiments}

\subsection {Evaluation metrics}


To numerically evaluate the accuracy of the clustering, we use two commonly used measures, the \emph{unsupervised clustering accuracy} (ACC), and the \emph{normalized mutual information} (NMI).
For completeness, we define ACC and NMI below, and refer the reader to~\citep{cai2011locally} for more details.
For data point $x_i$, let $l_i$ and $c_i$ denote its true label and predicted cluster, respectively.
Let $l = \left(l_1,\ldots l_n\right)$ and similarly $c = \left(c_1,\ldots c_n\right)$.

ACC is defined as
\begin{equation}
\text{ACC}(l,c) = \frac{1}{n}\max_{\pi \in \Pi}\sum_{i=1}^n \mathds{1}\left\{l_i=\pi\left(c_i\right)\right\},\notag
\end{equation}
where $\Pi$ is the collection of all permutations of $\{1,\ldots k\}$. The optimal permutation $\pi$ can be computed using the Kuhn-Munkres algorithm~\citep{munkres1957algorithms}.

NMI is defined as
\begin{equation}
\text{NMI}(l,c) = \frac{I(l;c)}{\max\{H(l),H(c)\}}, \notag
\end{equation}
where $I(l;c)$ denotes the mutual information between \(l\) and \(c\), and $H(\cdot)$ denotes their entropy.
Both ACC and NMI are in $[0,1]$, with higher values indicating better correspondence the clusters and the true labels.

\subsection {Clustering}

We compare SpectralNet to several deep learning-based clustering approaches on two real world datasets. In all runs we assume the number of clusters is given (k=10 in MNIST and k=4 in Reuters). As a reference, we also report the performance of $k$-means and (standard) spectral clustering.
Specifically, we compare SpectralNet to DEC~\citep{xie2016unsupervised}, DCN~\citep{yang2016towards}, VaDE~\citep{zheng2016variational}, JULE~\citep{yang2016joint}, DEPICT~\citep{dizaji2017deep}, and IMSAT~\citep{hu2017learning}.
The results for these six methods are reported in the corresponding papers.
Technical details regarding the application of $k$-means and spectral clustering appear in Appendix~\ref{app:tech}.

We considered two variants of Gaussian affinity functions: using Euclidean distances~\eqref{eq:gaussian}, and Siamese distances; the latter case follows Algorithm~\ref{algo:spectralNet}. In all experiments we used the loss~\eqref{eq:spectralNetLoss}.
In addition, we report results of SpectralNet (and the Siamese net) in both input space and code space. The code spaces are obtained using the publicly available autoencoders which are used to pre-train the weights of VaDE\footnote{\url{https://github.com/slim1017/VaDE/tree/master/pretrain_weights}}, and are 10-dimensional. We refer the reader to Appendix~\ref{app:tech} for technical details about the architectures and training procedures.


\subsubsection {MNIST}
MNIST is a collection of 70,000 $28 \times 28$ gray-scale images of handwritten digits, divided to training (60,000) and test (10,000) sets.
To construct positive pairs for the Siamese net, we paired each instance with its two nearest neighbors. An equal number of negative pairs were chosen randomly from non-neighboring points.

Table~\ref{tab:mnist_reuters_unsup} shows the performance of the various clustering algorithms on the MNIST dataset, using all 70,000 images for training.
As can be seen, the performance of SpectralNet is significantly improved when using Siamese distance instead of Euclidean distance, and when the data is represented in code space rather than in pixel space. With these two components, SpectralNet outperforms DEC, DCN, VaDE, DEPICT and JULE, and is competitive with IMSAT.

\begin{table}
\centering
\scalebox{0.8}{
\begin{tabular}{ l | l | l | l | l}
  \hline
  Algorithm                           & ACC (MNIST)                           & NMI (MNIST)  & ACC (Reuters)  & NMI (Reuters)    \\ \hline
  $k$-means                           & .534                                  & .499                    &.533                & .401     \\ \hline
  Spectral clustering                 & .717                                  & .754                    & NA                 & NA     \\ \hline\hline
  DEC                                 & .843$^*$                              & .8$^{**}$               & .756$^*$           & not reported \\ \hline
  DCN                                 & .83$^{**}$                            & .81$^{**}$              & not reported       & not reported     \\ \hline
  VaDE                                & .9446$^\dagger$                       & not reported            & .7938$^\dagger$    & not reported     \\ \hline
  JULE                                & not reported                          & .913$^\ddagger$         & not reported       & not reported     \\ \hline
  DEPICT                              & .965$^{\dagger\dagger}$               & .917$^{\dagger\dagger}$ & not reported       & not reported     \\ \hline
  IMSAT                               &{\bf.984$\pm$.004}$^{\ddagger\ddagger}$& not reported            & .719 $^{\ddagger\ddagger}$& not reported     \\ \hline
  SpectralNet (input space, Euclidean distance)      & .622$\pm$.008           & .687$\pm$.004           & .645$\pm$.01       &.444$\pm$.01 \\ \hline
  SpectralNet (input space, Siamese distance)       & .826$\pm$.03            & .884$\pm$.02            & .661$\pm$ 017      & .381 $\pm$ .057\\ \hline
  SpectralNet (code space, Euclidean distance)       & .800$\pm$.003           & .814$\pm$.008           & .605$\pm$.053      &.401$\pm$.061   \\ \hline
  {\bf SpectralNet (code space, Siamese distance)}  & .971$\pm$.001           & .924$\pm$.001           & {\bf.803$\pm$.006} &{\bf.532$\pm$.010} \\ \hline
\end{tabular}}
\caption{Performance of various clustering methods on MNIST and Reuters datasets. ($^*$) reported in~\citep{xie2016unsupervised}. ($^{**}$) reported in~\citep{yang2016towards}, ($^\dagger$) reported in~\citep{zheng2016variational}, ($^\ddagger$) reported in~\citep{dizaji2017deep}, ($^{\dagger\dagger}$) reported in~\citep{yang2016joint}, ($^{\ddagger\ddagger}$) reported in~\citep{hu2017learning}. The IMSAT result on Reuters was obtained on a subset of 10,000 from the full dataset.}
\label{tab:mnist_reuters_unsup}
\end{table}

\begin{figure}
  \centering
  \includegraphics[width=2.5in]{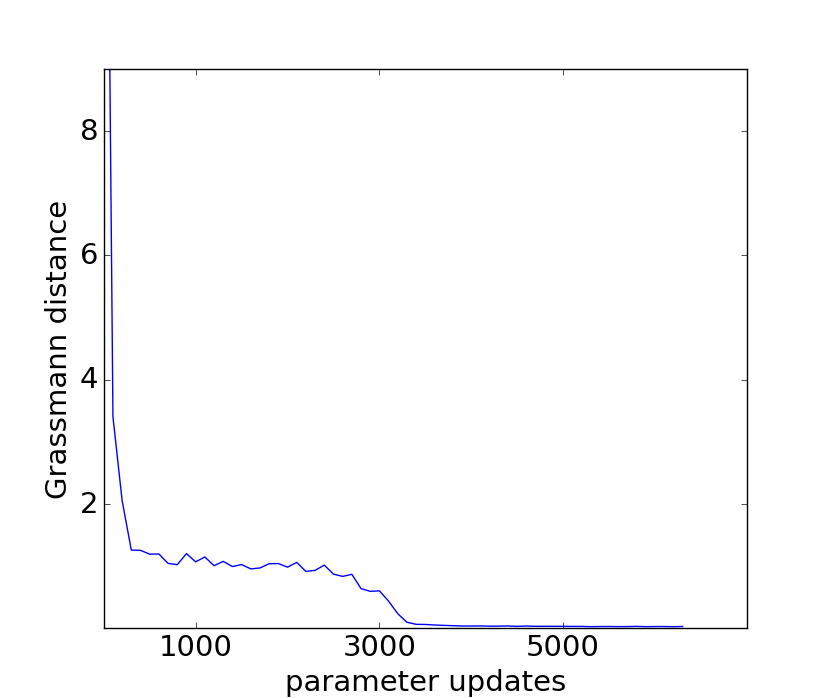}
  \caption{Grassmann distance as a function of iteration update for the MNIST dataset.}
  \label{fig:mnist_grassmann}
\end{figure}

To evaluate how well the outputs of SpectralNet approximate the true eigenvectors of the graph Laplacian, we compute the Grassmann distance between the subspace of SpectralNet outputs and that of the true eigenvectors. The squared Grassmann distance measures the sum of squared sines of the angles between two $k$-dimensional subspaces; the distance is in $[0,k]$.
Figure~\ref{fig:mnist_grassmann} shows the Grassmann distance on the MNIST dataset as a function of the training time (expressed as number of parameter updates).
It can be seen that the distance decreases rapidly at the beginning of training and stabilizes around 0.026 as time progresses.

To check the generalization ability of SpectralNet to new test points, we repeated the experiment, this time training SpectralNet only on the training set, and predicting the labels of the test examples by passing them through the net and associating each test example with the nearest centroid from the $k$-means that were performed on the embedding of the training examples.
The accuracy on test examples was .970, implying that SpectralNet generalizes well to unseen test data in this case. We similarly also evaluated the generalization
performance of k-means. The accuracy of k-means on the test set is .546 when using the input space and .776 when using the code space, both significantly inferior to SpectralNet.


\subsubsection{Reuters}\label{sec:reuters}
The Reuters dataset is  a collection of English news, labeled by category. Like DEC and VaDE, we used the following categories: corporate/industrial, government/social, markets, and economics as labels and discarded all documents with multiple labels.
Each article is represented by a tf-idf vector, using the 2000 most frequent words. The dataset contains $n=685,071$ documents.
Performing vanilla spectral clustering on a dataset of this size in a standard way is prohibitive.
The AE used to map the data to code space was trained based on a random subset of 10,000 samples from the full dataset.
To construct positive pairs for the Siamese net, we randomly sampled 300,000 examples from the entire dataset, and paired each one with a random neighbor from its 3000 nearest neighbors. An equal number of negative pairs was obtained by randomly pairing each point with one of the remaining points.

Table~\ref{tab:mnist_reuters_unsup} shows the performance of the various algorithms on the Reuters dataset.
Overall, we see similar behavior to what we observed on MNIST: SpectralNet outperforms all other methods, and performs best in code space, and using Siamese affinity.
Our SpectralNet implementation took less than 20 minutes to learn the spectral map 
on this dataset, using a GeForce GTX 1080 GPU. For comparison, computing the top four eigenvectors of the Laplacian
matrix of the complete data, needed for spectral clustering, took over 100 minutes using ARPACK. Note that both SpectralNet and spectral clustering require pre-computed nearest neighbor graph. Moreover, spectral clustering using the ARPACK eigenvectors failed
to produce reasonable clustering. This illustrates the robustness of our method 
in contrast to the well known instability of spectral clustering to outliers. 

To evaluate the generalization ability of SpectralNet, we divided the data randomly to a 90\%-10\% split, re-trained the Siamese net and SpectralNet on the larger subset, and predicted the labels of the smaller subset. The test accuracy was 0.798, implying that as on MNIST, SpectralNet generalizes well to new examples.


\subsection {Semi-supervised learning}

SpectralNet can be extended to also leverage labeled data points, when such points are available.
This can be done by setting the affinity between labeled points according to their true labels, rather than using~\eqref{eq:gaussian}.
Figure~\ref{fig:circles_semisup} shows a demonstration in 2D, where SpectralNet fails to recognize the true cluster structure, due to the large amount of noise in the data. However, using labels for randomly chosen 2\% of the points allows SpectralNet to recognize the right cluster structure, despite the noise.
\begin{figure}[t!]
  \centering
  \includegraphics[width=1.5in]{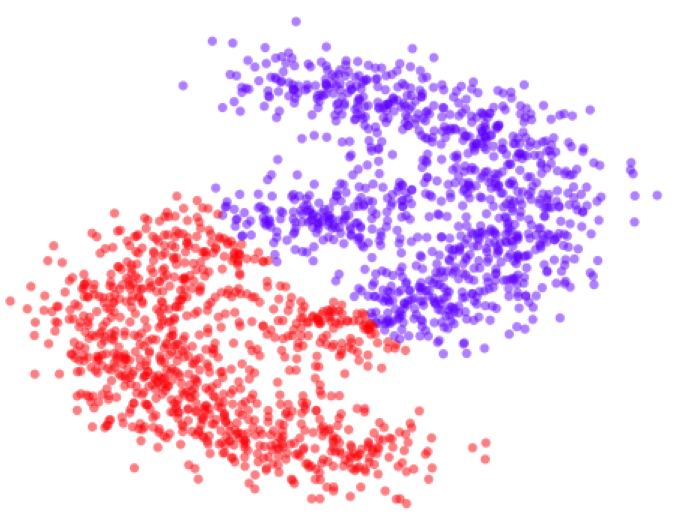}
  \includegraphics[width=1.5in]{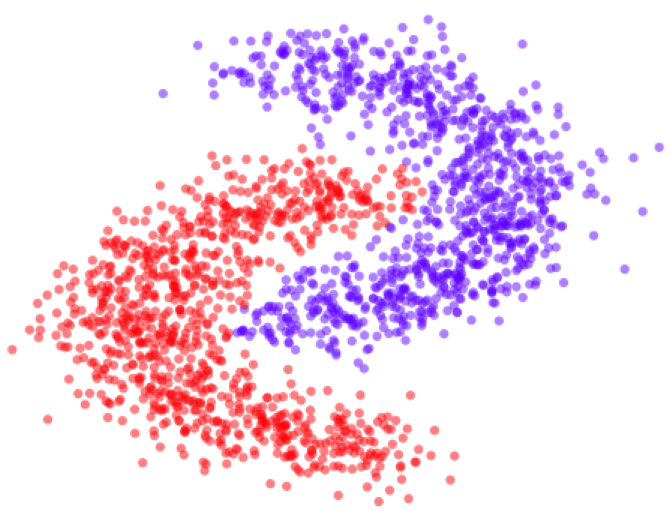}
  \caption{Illustrative 2D demo for semi-supervised learning using SpectralNet. Left: SpectralNet fails to recognize the true cluster structure, due to the heavy noise. Right: using randomly chosen 2\% of the labels the true cluster structure is recognized. }
 \label{fig:circles_semisup}
 \end{figure}

We remark that the labeled points can also utilized in additional ways, such as to enrich the training set of the Siamese net by constructing  positive and the negative pairs of labeled points based on their label, and by adding cross entropy term to spectralNet loss.



\section {Conclusions}\label{sec:conclusions}

We have introduced SpectralNet, a deep learning approach for approximate spectral clustering.
The stochastic training of SpectralNet allows us to scale to larger datasets than what vanilla spectral clustering can handle, and the parametric map obtained from the net enables straightforward out of sample extension.
In addition, we propose to use unsupervised Siamese networks to compute distances, and empirically show that this results in better performance, comparing to standard Euclidean distances.
Further improvement are achieved by applying our network to code representations produced with a standard stacked autoencoder. We present a novel analysis of the VC dimension of spectral clustering, and derive a lower bound on the size of neural nets that compute it.
In addition, we report state of the art results on two benchmark datasets, and show that SpectralNet outperforms existing methods when the clusters cannot be contained in non overlapping convex shapes.
We believe the integration of spectral clustering with deep learning provides a useful tool for unsupervised deep learning.

\section* {Acknowledgements}
We thank Raphy Coifman and Sahand Negahban for helpful discussions.

\bibliography{references}
\bibliographystyle{iclr2018_conference}

\newpage
\appendix


\section{Illustrative datasets}
\label{sec:simul}
\label{app:2d}

To compare SpectralNet to spectral clustering, we consider a simple dataset of 1500 points in two dimensions, containing two nested `C'.
We applied spectral clustering to the dataset by computing the eigenvectors of the unnormalized graph Laplacian $L=D-W$ corresponding to the two smallest eigenvalues, and then applying $k$-means (with $k$=2) to these eigenvector embeddings. The affinity matrix $W$ was computed using $
W_{i,j} = \exp\left(-\frac{\|x_i-x_j\|^2}{\sigma^2}\right),\notag
$ where the scale $\sigma$ was set to be the median distance between a point to its 3rd neighbor -- a standard practice in diffusion applications.

\begin{figure}[t!]
  \centering
  \includegraphics[width=4.4in, height=0.90in]{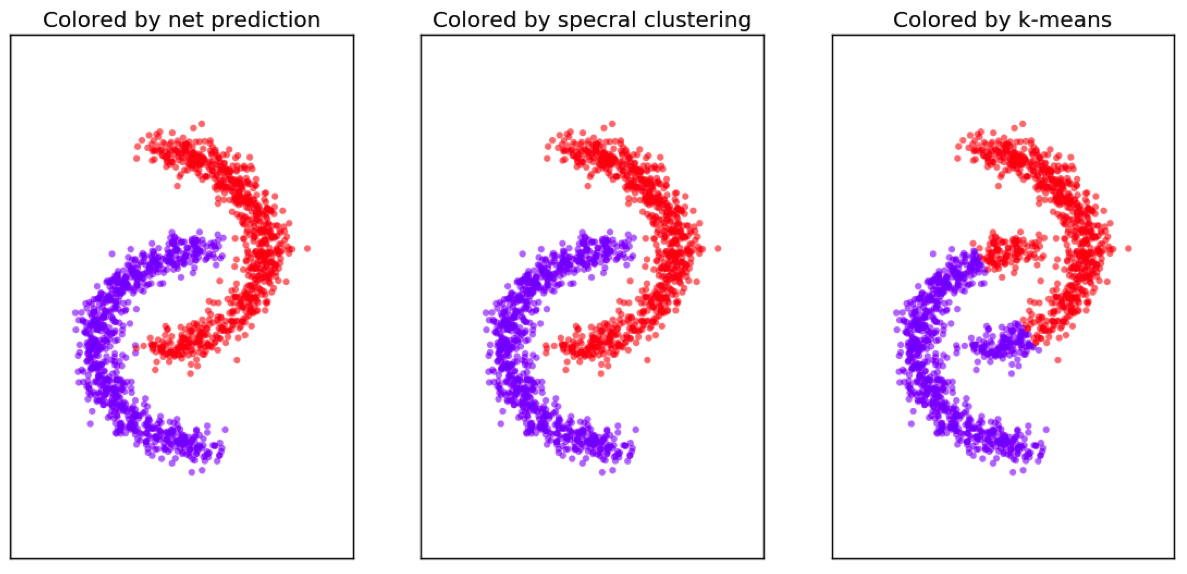}
  \includegraphics[width=2.25in, height=1.25in]{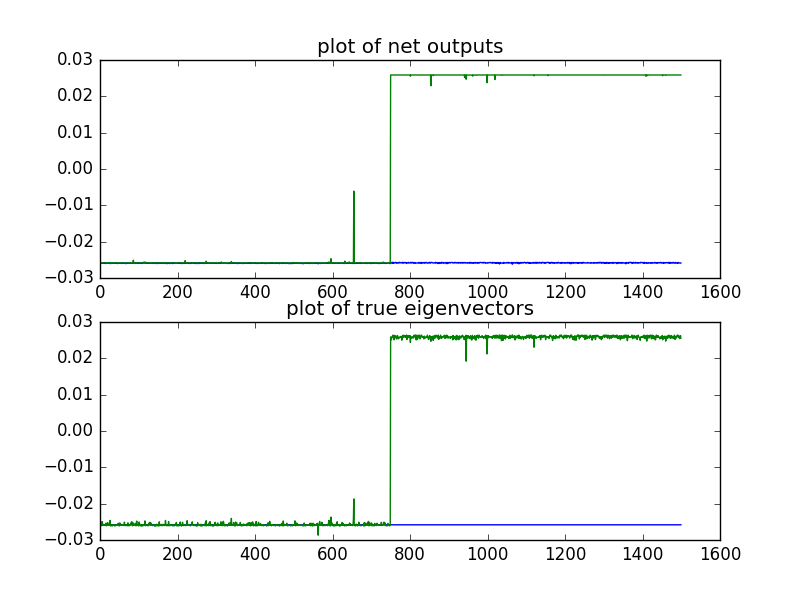}
  \includegraphics[width=2.25in, height=1.25in]{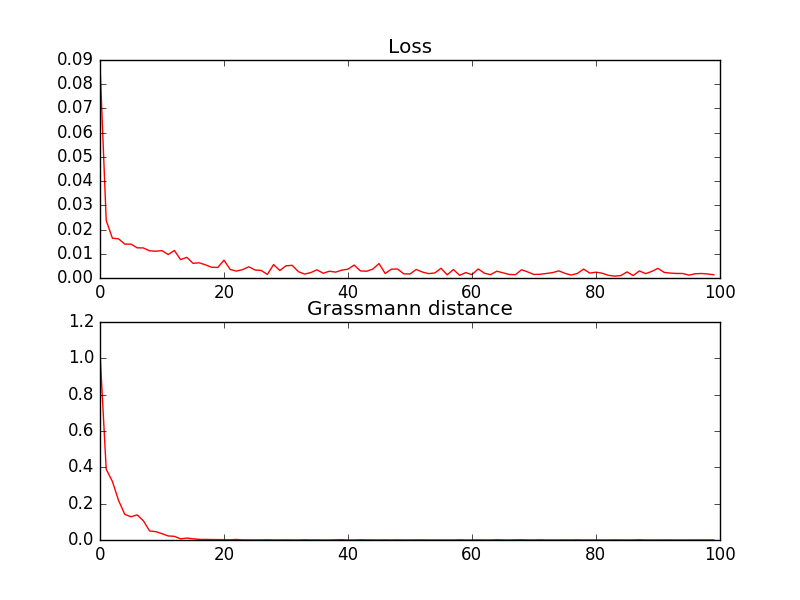}
  \caption{SpectralNet performance on the nested `C' example. Top row: clustering using SpectralNet (left), spectral clustering (center), and $k$-means (right). Bottom row, left panel: SpectralNet outputs (plotted in blue and green) vs.~the true eigenvectors. Bottom row, right panel: loss and Grassmann distance as a function of iteration number; the values on the horizontal axis $\times 100$ are the numbers of the parameter updates.}
 \label{fig:cc}
 \end{figure}

Figure~\ref{fig:cc} shows the clustering of the data obtained by SpectralNet, standard spectral clustering, and $k$-means. It can be seen that both SpectralNet and spectral clustering identify the correct cluster structure, while $k$-means fails to do so. Moreover, despite the stochastic training, the net outputs closely approximate the two true eigenvectors of $W$ with smallest eigenvalues. Indeed the Grassmann distance between the net outputs and the true eigenvectors approaches zero as the loss decreases.

In the next experiment, we trained, DCN, VaDE, DEPICT (using agglomerative clustering initialization) and IMSAT (using adversarial perturbations for data augmentation) on the 2D datasets of Figure~\ref{fig:intro}.
The experiments were performed using the code published by the authors of each paper.
For each method, we tested various network architectures and hyper-parameter settings.
Unfortunately, we were unable to find a setting that will yield an appropriate clustering on any of the datasets for DCN, VaDE and DEPICT. IMSAT worked on two out of the five datasets, however failed to yield an appropriate clustering in fairly simple cases.
Plots with typical results of each of the methods on each of the five 2D datasets is shown in Figure~\ref{fig:2D}.

\begin{figure}[t]
  \centering
  \includegraphics[width=1.05in,height=0.90in]{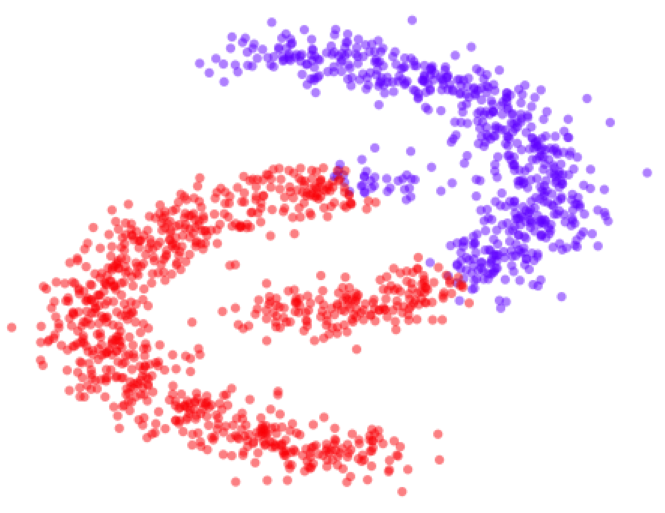}
  \includegraphics[width=1.05in,height=0.90in]{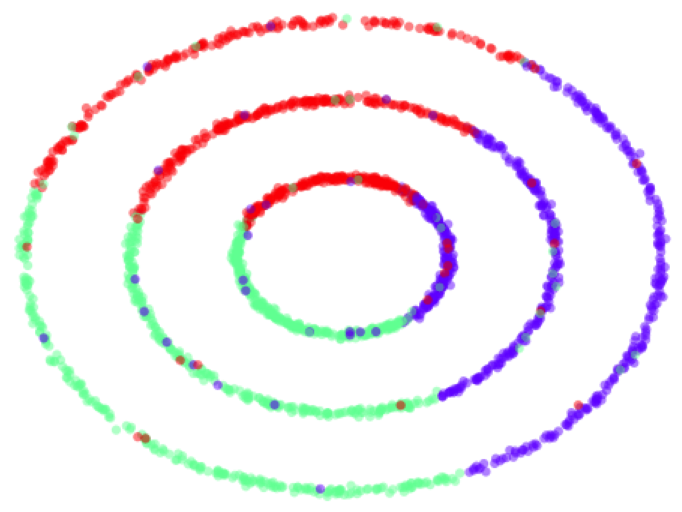}
  \includegraphics[width=1.05in,height=0.90in]{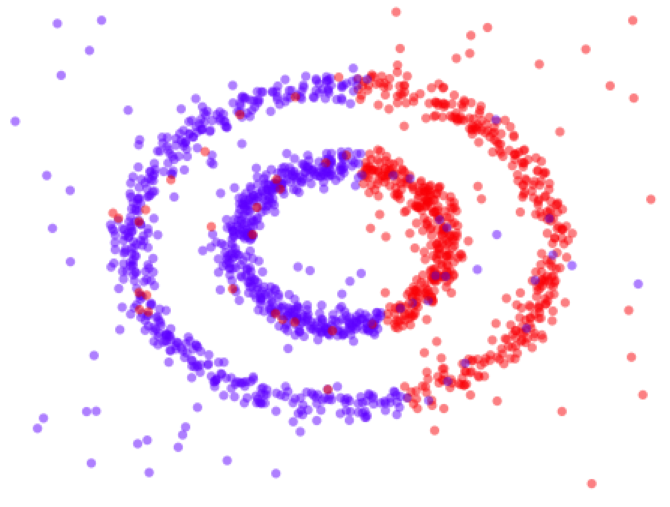}
  \includegraphics[width=1.05in,height=0.90in]{dcn_nspirals.png}
  \includegraphics[width=1.05in,height=0.90in]{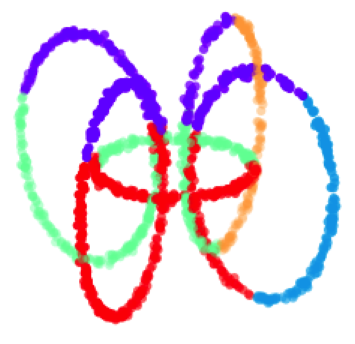}

  \includegraphics[width=1.05in,height=0.90in]{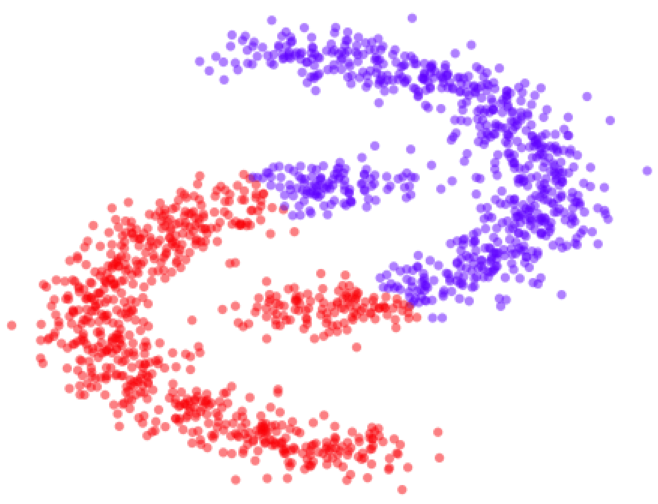}
  \includegraphics[width=1.05in,height=0.90in]{vade_ncircles3.png}
  \includegraphics[width=1.05in,height=0.90in]{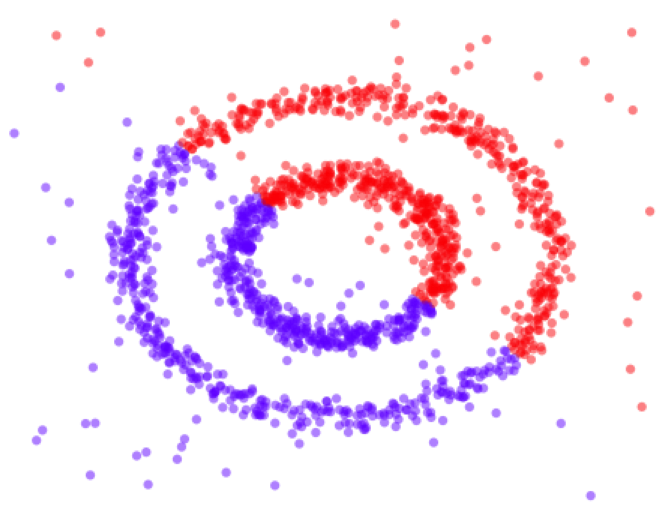}
  \includegraphics[width=1.05in,height=0.90in]{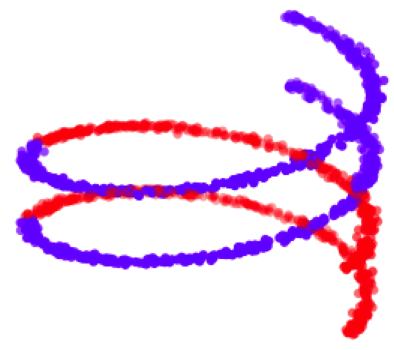}
  \includegraphics[width=1.05in,height=0.90in]{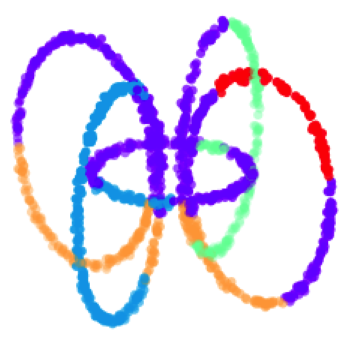}

  \includegraphics[width=1.05in,height=0.90in]{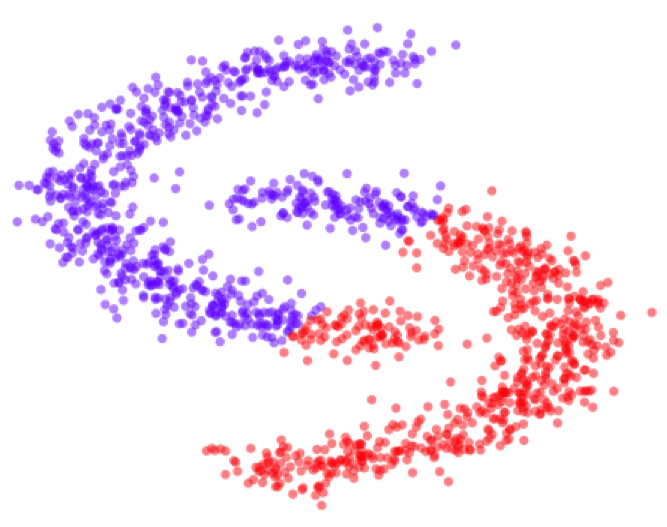}
  \includegraphics[width=1.05in,height=0.90in]{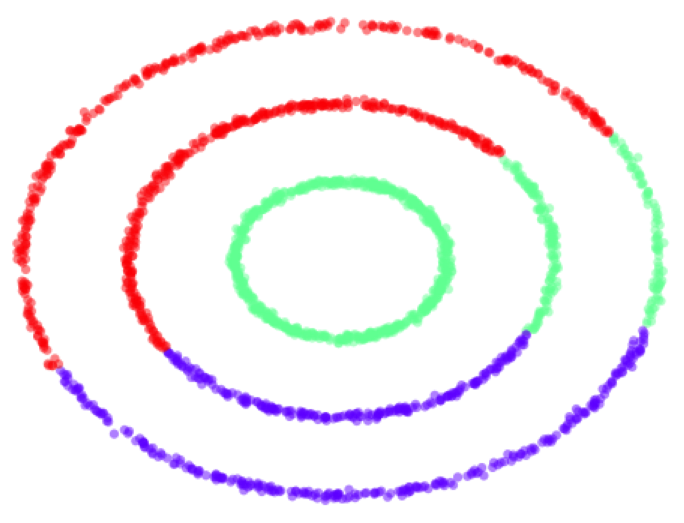}
  \includegraphics[width=1.05in,height=0.90in]{depict_ncircles2bkgd.png}
  \includegraphics[width=1.05in,height=0.90in]{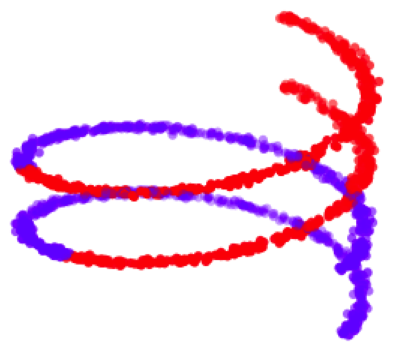}
  \includegraphics[width=1.05in,height=0.90in]{depict_nrings.png}

  \includegraphics[width=1.05in,height=0.90in]{imsat_cc.png}
  \includegraphics[width=1.05in,height=0.90in]{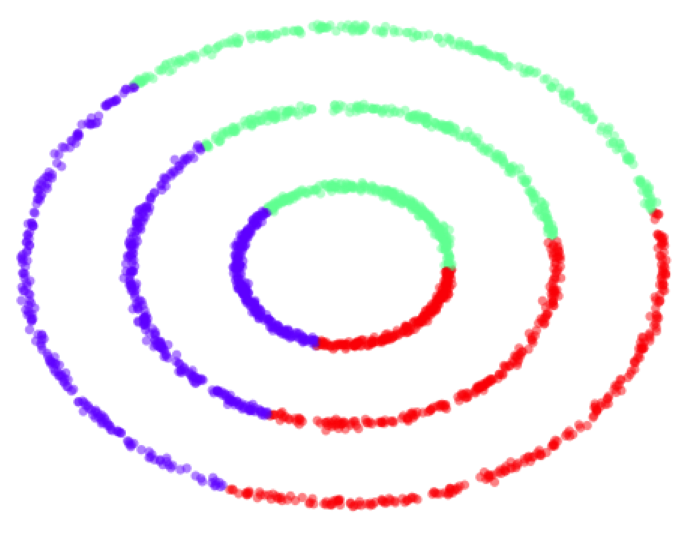}
  \includegraphics[width=1.05in,height=0.90in]{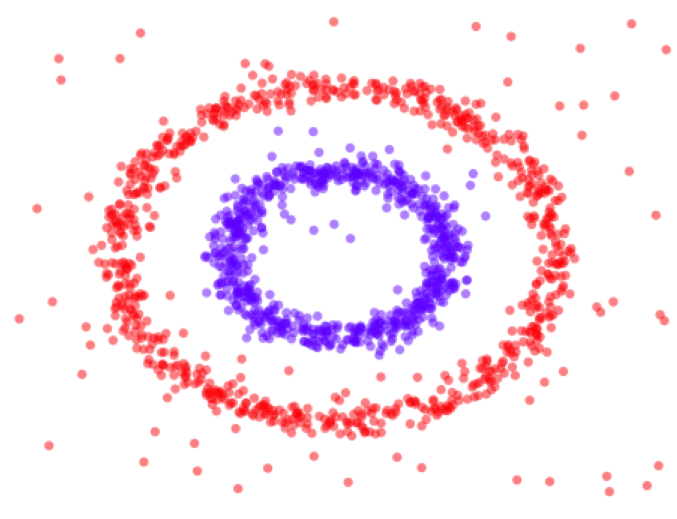}
  \includegraphics[width=1.05in,height=0.90in]{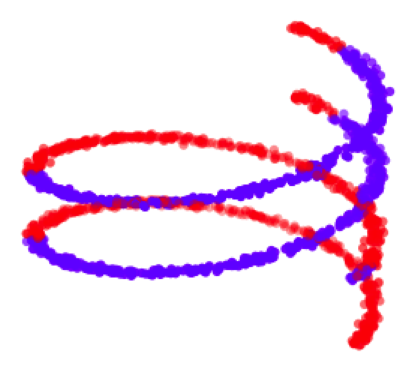}
  \includegraphics[width=1.05in,height=0.90in]{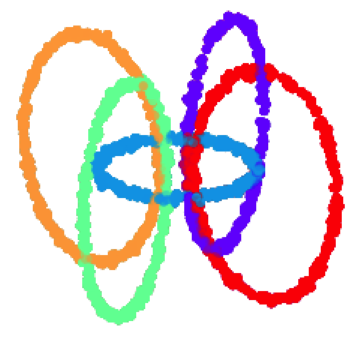}
  \caption{from top to bottom: Results of DCN, VaDE, DEPICT and IMSAT on our illustrative datasets.}
 \label{fig:2D}
 \end{figure}

To further investigate why these methods fail, we performed a sequence of experiments with the two nested 'C's data, while changing the distance between the two clusters. The results are shown in Figure~\ref{fig:CC_seq}. We can see that all three methods fail to cluster the points correctly once the clusters cannot be separated using non-overlapping convex shapes.

\begin{figure}[h!]
  \centering
  \includegraphics[width=1.31in]{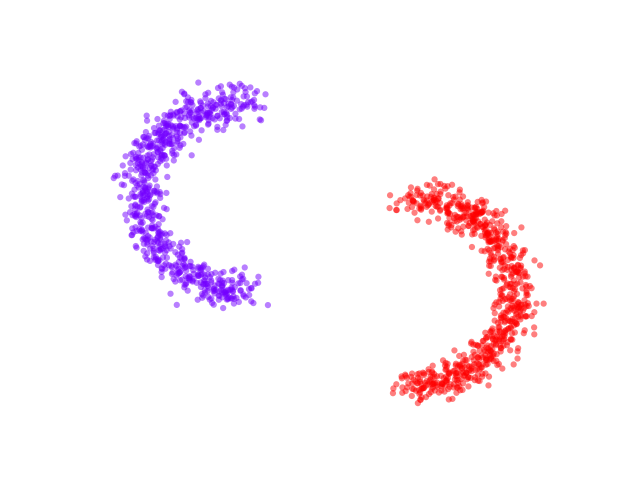}
  \includegraphics[width=1.31in]{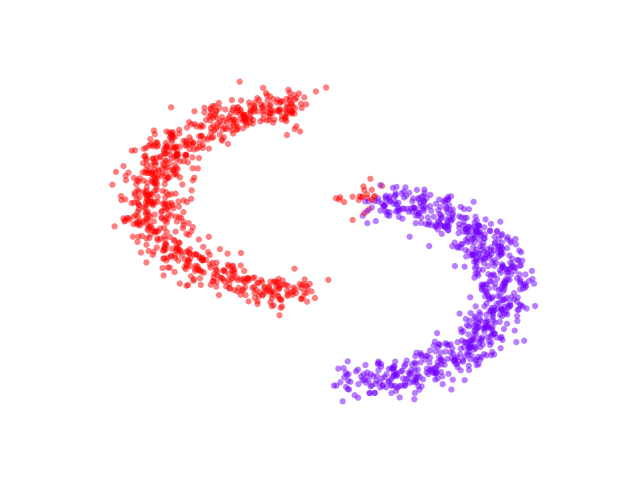}
  \includegraphics[width=1.31in]{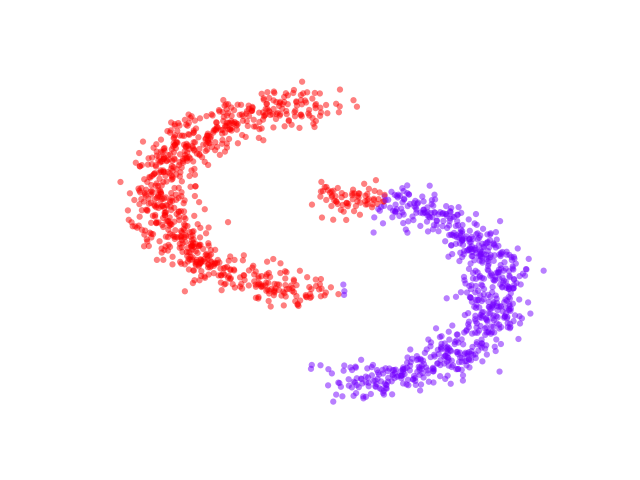}
  \includegraphics[width=1.31in]{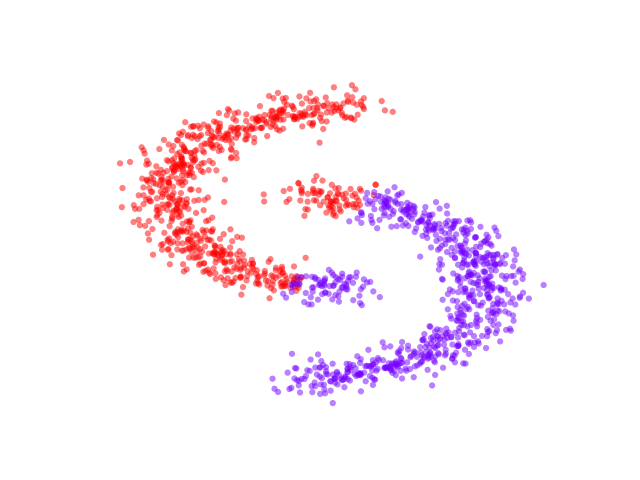}
  \includegraphics[width=1.31in]{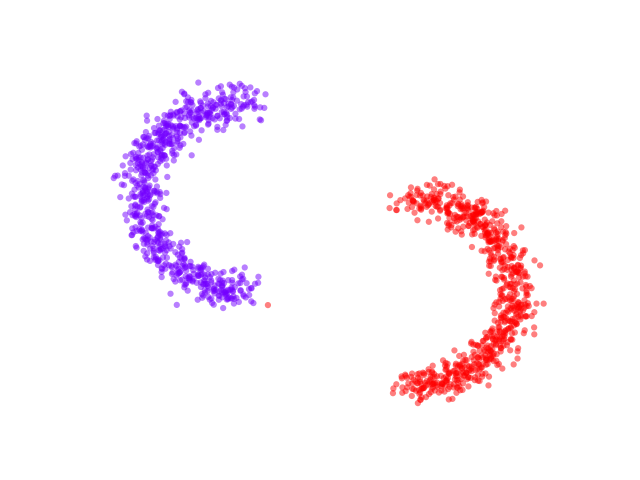}
  \includegraphics[width=1.31in]{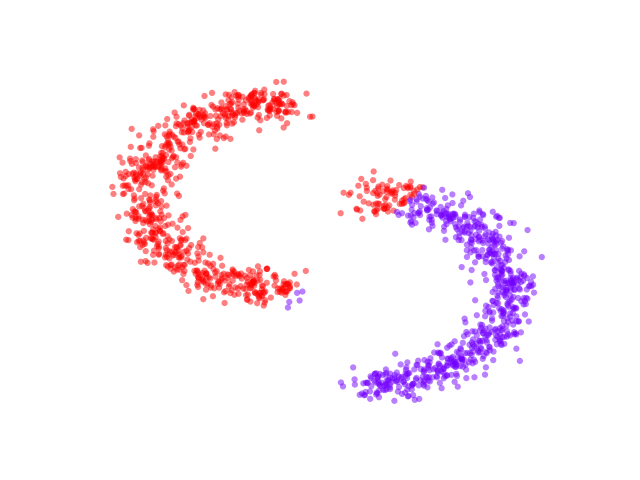}
  \includegraphics[width=1.31in]{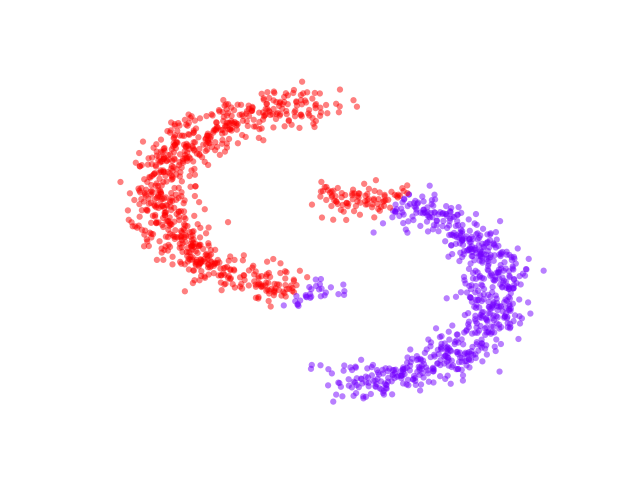}
  \includegraphics[width=1.31in]{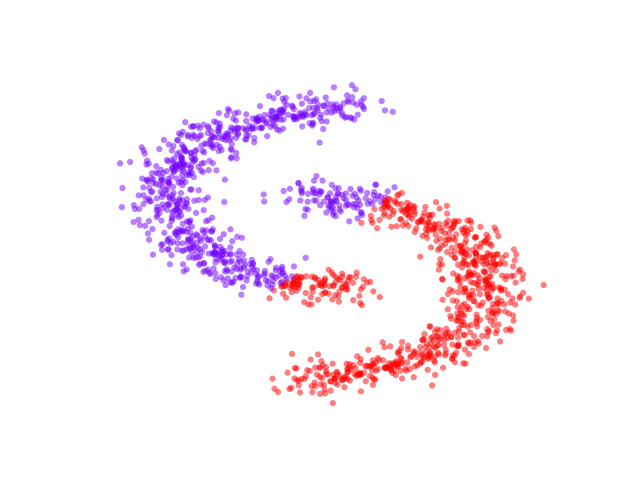}
  \includegraphics[width=1.31in]{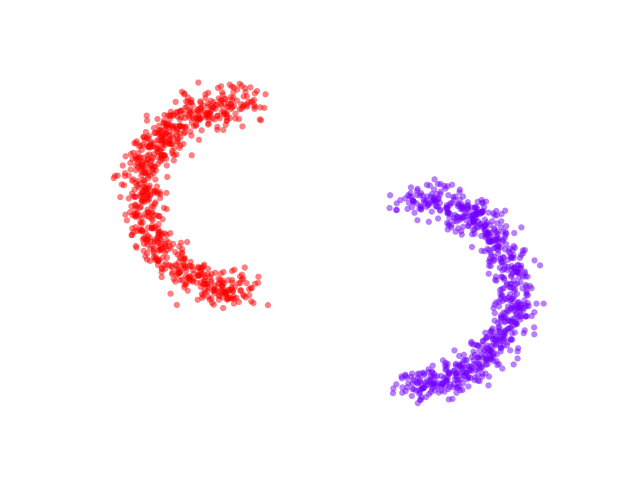}
  \includegraphics[width=1.31in]{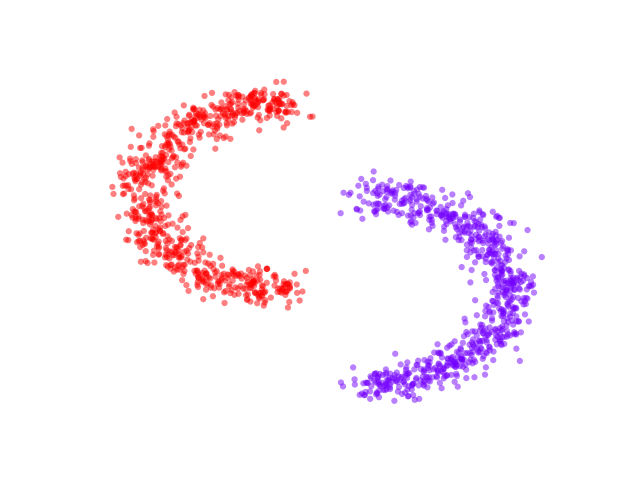}
  \includegraphics[width=1.31in]{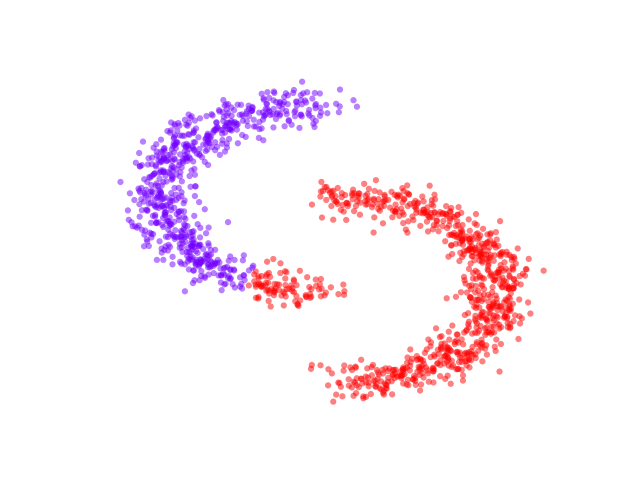}
  \includegraphics[width=1.31in]{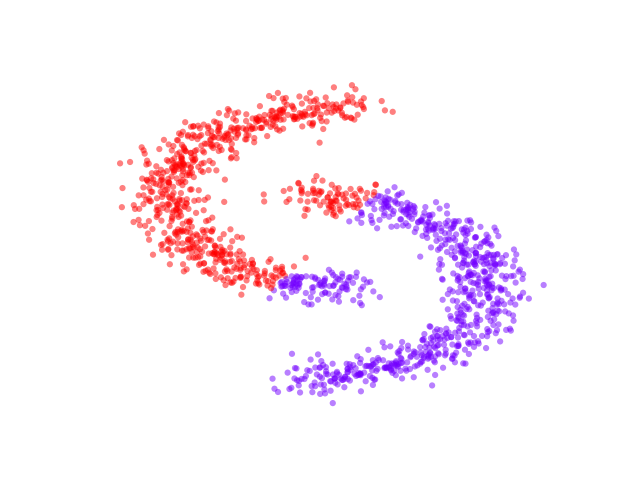}
  \includegraphics[width=1.31in]{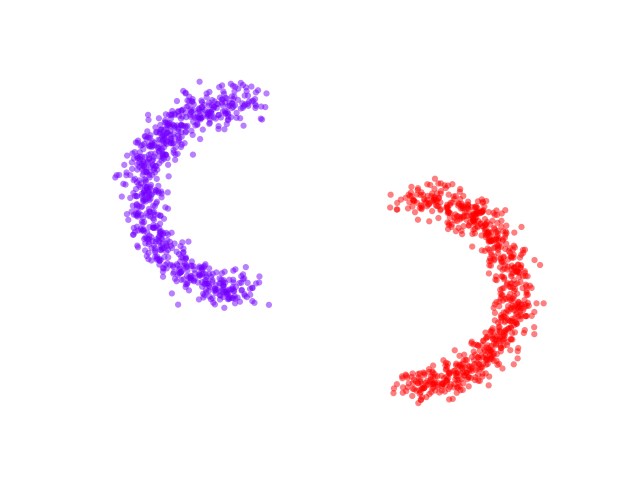}
  \includegraphics[width=1.31in]{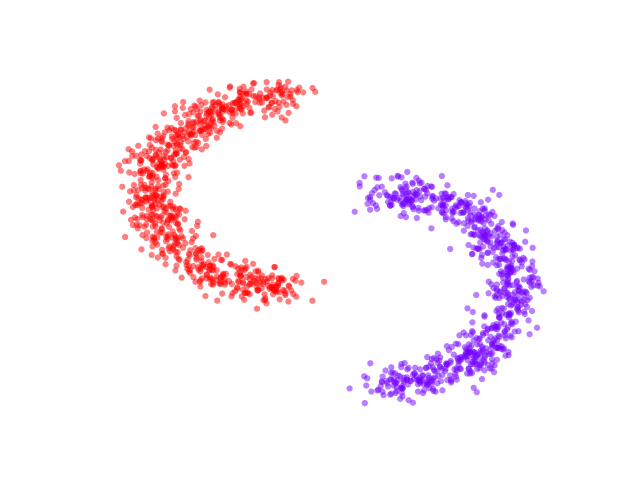}
  \includegraphics[width=1.31in]{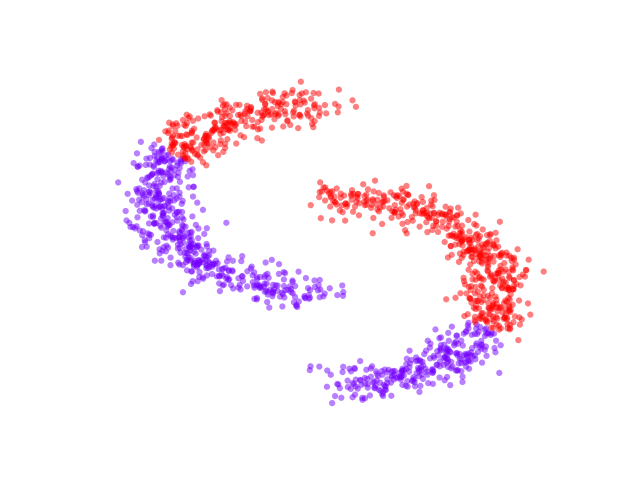}
  \includegraphics[width=1.31in]{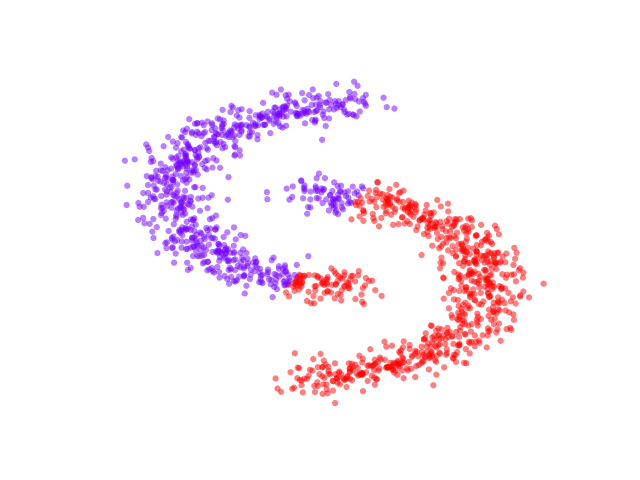}
  \caption{From top: Typical results of DCN, VaDE, DEPICT and IMSAT on the nested 'C's, with several different distances between the two clusters.}
 \label{fig:CC_seq}
 \end{figure}

Interestingly, although the target distribution of
DEPICT was initialized with agglomerative clustering, which successfully clusters the nested 'C's, its target distribution becomes corrupted throughout the training, although its loss is considerably reduced, see Figure~\ref{fig:depict}.

\begin{figure}[h!]
  \centering
  \includegraphics[width=1.4in]{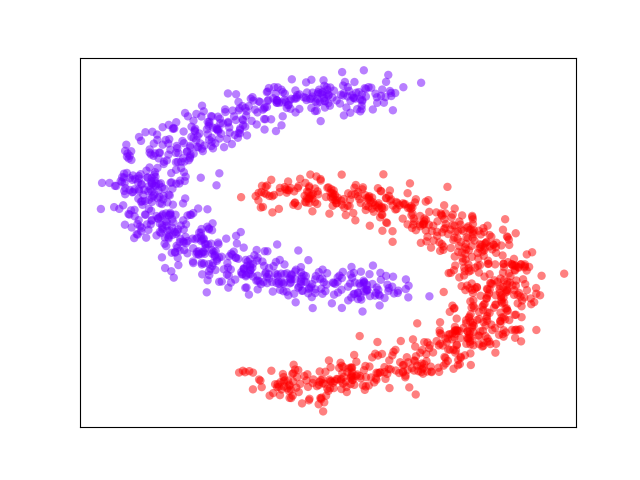}
  \includegraphics[width=1.4in]{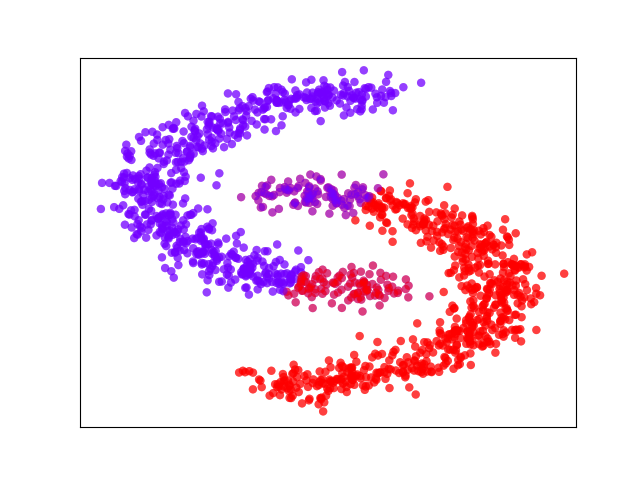}
  \caption{The nested 'C's, colored by DEPICT target distribution. Left: at initialization (with agglomerative clustering initialization). the DEPICT loss at this stage is 9.01. Right: after DEPICT training. The loss is 0.22. Although the loss decreases with training, the target distribution becomes corrupted.}
 \label{fig:depict}
 \end{figure}


\section{Correctness of the $QR$ decomposition}\label{app:cholesky}
We next verify that the Cholesky decomposition can indeed be used to compute the QR decomposition of a positive definite matrix. First, observe that since $L$ is lower triangular, then so is $L^{-1}$, and $(L^{-1})^T$ is upper triangular.
Hence for $i=1,\ldots m$, the column space of the first $i$ columns of $A$ is the same as the column space of the first $i$ columns of $Q = A(L^{-1})^T$.
To show that the columns of $Q$ corresponds to Gram-Schmidt orthogonalization of the columns of $A$, it therefore remains to show that $Q^TQ = I$.
Indeed:
\begin{align}
Q^TQ = L^{-1}A^TA(L^{-1})^T
 =  L^{-1}LL^T(L^{-1})^T
 = (L^{-1}L)^T
 = I. \notag
\end{align}


\section{Section~\ref{sec:analysis} proofs}\label{app:proofs}

\subsection{Preliminaries}
To prove Theorem~\ref{thm:vc}, we begin with the following definition and lemmas.

\begin{definition}[$(\alpha,\beta)$-separated graph] Let $\alpha > \beta \ge 0$. An $(\alpha,\beta)$-separated graph is $G=(V,W)$, where  $V$ has an even number of vertices and has a balanced partition $V = S \cup T$, $|S| = |T|$, and $W$\ is an affinity matrix so that:
\begin{itemize}
\item For any $v_i,v_j \in S$ (resp.~$T$), there is a path $v_i=v_{k_1},v_{k_2},\ldots, v_{k_l}=v_j \in S$, so that for every two consecutive points $v_{k_l},v_{k_{l+1}}$  along the path, $W_{{k_l},{k_{l+1}}} \ge \alpha$.
\item For any $v_i \in S,\; v_j \in T$, $W_{i,j}  \le \beta$.
\end{itemize}

\end{definition}

\begin{lemma} \label{lem:a-b}
For any integer $m>0 $ there exists a set $\tilde{X} = \{x_1,\ldots,x_m\}\subseteq \mathbb{R}^d$ ($d \ge 3$), so that for any binary partition $\tilde{X}= \tilde{S}\cup \tilde{T}$, we can construct a set $X$ of $n=10m$ points, $\tilde{X} \subset X$, and a balanced binary partition $X = S\cup T$, $|S| = |T|$ of it, such that
\begin{itemize}
\item $\tilde{S} \subset S,\; \tilde{T} \subset T$
\item For any $x_i,x_j \in S$ (resp.~$T$), there is a path $x_i,x_{k_1},x_{k_2},\ldots, x_{k_l},x_j \in S$, so that for every two consecutive points $x_{k_l},x_{k_{l+1}}$  along the path, $\|x_{k_l}-x_{k_{l+1}}\| \le b <1$ (\textbf{property a}).
\item For any $x_i \in S,\; x_j \in T$, $\|x_i-x_j|  \ge 1$(\textbf{property b}).
\end{itemize}
\end{lemma}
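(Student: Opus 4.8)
The plan is to realize $\tilde X$ as a two-dimensional integer grid lying in the plane $\{z=0\}\subseteq\mathbb{R}^3\subseteq\mathbb{R}^d$, say $\tilde X=\{(i,j,0): 1\le i,j\le \sqrt m\}$ (padding $m$ up to a perfect square is a harmless technicality, and for $d>3$ we simply embed). With unit grid spacing, any two distinct original points are at distance $\ge 1$, so \textbf{property b} holds automatically for every pair of originals, \emph{regardless of the partition}; this is the only part of the argument that must fix the originals before seeing $\tilde S,\tilde T$. The hypothesis $d\ge 3$ is essential here: the originals occupy a $2$-plane, leaving an entire transverse direction ($z$) free for routing the augmentation, which would be impossible in $\mathbb{R}^2$.

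Next, given an arbitrary partition $\tilde X=\tilde S\cup\tilde T$, I would connect each class using the free coordinate. Fix a step size $b=\tfrac12<1$ and a height $h=1$. For the $S$-class I build a single \emph{backbone}: a polygonal path living in the plane $\{z=h\}$ that passes above every point of $\tilde S$, sampled densely enough that consecutive backbone points are at distance $\le b$. Each original $s\in\tilde S$ is then joined to the backbone by a \emph{vertical connector}, a short chain running straight up from $s$ to its foot on the backbone with steps of length $\le b$. The same construction is carried out for $\tilde T$, but reflected into $\{z\le 0\}$ (backbone at $\{z=-h\}$, connectors pointing down). Because the backbone and connectors of $S$ all lie in $\{z\ge 0\}$ directly above $S$-originals, while those of $T$ lie in $\{z\le 0\}$ above $T$-originals, one checks that every cross-class pair is separated either horizontally (distance $\ge 1$, since the two towers sit over distinct originals) or vertically (by at least $h=1$), giving \textbf{property b} for the whole augmented set; and since every point of $S$ hangs from one connected backbone, any two of them are joined by a path of $\le b$-steps, giving \textbf{property a} with the same $b<1$.

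It remains to respect the budget $n=10m$ and to balance the partition. With unit spacing a Hamiltonian ``snake'' through the grid has total length $\approx m$, so each backbone costs $O(m)$ points while each connector costs $\le h/b=O(1)$ points; choosing the constants $b,h$ and the spacing so that each class consumes at most $5m$ points keeps the total at most $10m$. Whichever class is smaller is then padded up to exactly $5m$ by appending extra collinear points to its backbone (or, in degenerate cases such as $\tilde T=\emptyset$, by placing a connected block of dummy points in $\{z\le -1\}$), which preserves both properties and yields the required balanced partition $|S|=|T|=5m$ with $\tilde S\subset S$, $\tilde T\subset T$.

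The main obstacle is reconciling cheap connectivity with strong separation: \textbf{property a} pushes same-class points and their connectors close together and densely sampled, whereas \textbf{property b} demands that \emph{every} auxiliary point of one class stay at distance $\ge 1$ from \emph{every} point of the other. In the plane these goals conflict once the partition is adversarial (a checkerboard coloring, for instance, forces same-class points apart and surrounds each by the opposite class). The resolution, and the reason $d\ge 3$ is needed, is to give each class its own half-space: the transverse coordinate buys separation ``for free'' while still permitting the dense, small-step sampling that connectivity requires. The remaining work — pinning down the explicit constants so that the point count lands at or below $10m$ and the two classes can be balanced — is routine bookkeeping.
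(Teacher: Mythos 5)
Your proposal is correct and follows essentially the same construction as the paper's proof: place $\tilde X$ on a unit grid in the $z=0$ plane of $\mathbb{R}^3$, give each class its own half-space by building a connected structure at $z=\pm 1$ joined to the originals by vertical connectors, and let the transverse coordinate supply the cross-class separation of $\ge 1$. The only differences are bookkeeping — the paper balances the classes first with $m$ extra grid points and connects the lifted copies by a minimal spanning tree with $4m$ equally spaced points (hitting $n=10m$ exactly), whereas you use a snake backbone over the grid and pad to balance at the end — but both arguments establish properties \textbf{a} and \textbf{b} in the same way.
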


\begin{proof}
We will prove this for the case $d = 3$; the proof holds for any $d \ge 3$.

Let $m >0$ be integer. We choose the set $\tilde X$ to lie in a 2-dimensional unit grid inside a square of minimal diameter, which is placed in the $Z=0$ plane. Each point $x_i$ is at a distance 1 from its neighbors.

Next, given a partition of $x_1,\ldots, x_m$ to two subsets, $\tilde{S}$ and $\tilde{T}$, we will construct a set $X \supset \tilde X$ with $n=10m$ points and a partition $S \cup T$ that satisfy the conditions of the lemma (an illustration can be seen in Figure~\ref{fig:theory}). First, we add points to obtain a balanced partition. We do so by adding $m$ new points $x_{m+1},\ldots, x_{2m}$, assigning each of them arbitrarily to either $\tilde{S}$ or $\tilde{T}$ until $|\tilde{S}| = |\tilde{T}| = m$. We place all these points also in grid points in the $Z=0$ plane so that all $2m$ points lie inside a square of minimal diameter. We further add all the points in $\tilde S$ to $S$ and those in $\tilde T$ to $T$.

\begin{figure}[tb]
  \centering
  \includegraphics[width=4.2in, height=1.80in]{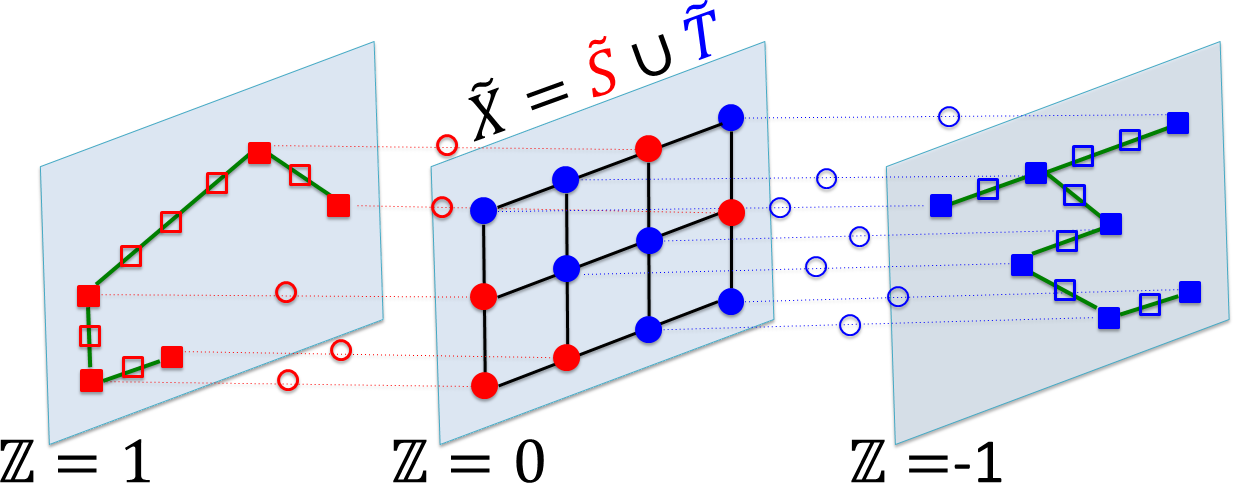}
  \caption{Illustration of the construction of Lemma~\ref{lem:a-b}. We select the set $\tilde X$ to lie in a grid in the $Z=0$ plane. Given an arbitrary dichotomy $\tilde X=\tilde S \cup \tilde T$ (points are marked with filled circles, colored respectively in red and blue), we first add points to make the sets balanced (not shown). Next, we make a copy for $S$ at $Z=1$ and for $T$ at $Z=-1$ (filled squares). We then add midpoints between each point and its copy (empty circles), and finally add more points along the minimal length spanning tree (empty squares). Together, all the red points form the set $S$; the blue points form the set $T$, and $X=S \cup T$.}
 \label{fig:theory}
 \end{figure}

In the next step, we prepare a copy of the $\tilde{S}$-points at $Z = 1$ (with the same $X, Y$ coordinates) and a copy of the $\tilde{T}$-points at $Z = -1$. We denote these copies by $x'_1, ..., x'_{2m}$ and refer to  the lifted points at $Z = 1$ by $S'$ and at $Z = -1$ by $T'$. Next, we will add $6m$ more points to make the full set of $n=10m$ points satisfy properties \textbf{a} and \textbf{b}. First, we will add the midpoint between every point and its copy, i.e., $x''_i=(x_i+x'_i)/2$. We assign each such midpoint to $S$ (resp.~$T$) if it is placed between $x_i \in S$ and $x'_i \in S'$ (resp.~$T$ and $T'$). Then we connect the points in $S'$ (resp.~$T'$) by a minimal length spanning tree and add $4m$ more points along the edges of these two spanning trees so that the added points are equally spaced along every edge. We assign the new points on the spanning tree of $S'$ to $S$ and of $T'$ to $T$.

We argue that the obtained point set $X$ of size $10m$ satisfies the conditions of the lemma. Clearly, $\tilde{S} \subset S$ and $\tilde{T} \subset T$. To show that property \textbf{a} is satisfied, note that the length of each spanning tree cannot exceed $2m$, since the full $2m$ grid points $\tilde X$ can be connected with a tree of length $2m-1$. It is evident therefore that every two points $x_i,x_j \in S$ (resp.~$T$) are connected by a path in which the distance between each two consecutive points is strictly less than 1 (property \textbf{a}). Property \textbf{b} too is satisfied because the grid points in $\tilde X$ are at least distance 1 apart; each midpoint $x''_i$  is distance 1/2 from $x_i$ and $x'_i$ (and they all belong to the same set, either $S$ or $T$), but its distance to the rest of the points in $\tilde X$ exceeds 1, and the rest of the points in $S$ (resp.~$T$) are on the $Z=1$ (resp.~$Z=1$) plane, and so they are at least distance 1 away from members of the opposite set which all lie in the $Z \le 0$ (resp.~$Z \ge 0$) half space.
\end{proof}


\begin{lemma}\label{lem:ineq}.
Let $f(\cdot)$ be the spectral clustering loss
$$f(y) = \sum_{i,j} W_{i,j}(y_i-y_j)^2.$$
Let $G=(X,W)$ be a $(\alpha$,$\beta)$-separated graph, such that $|X| =n \ge 4$. Let $y^*$ be a minimizer of $f(y)$ w.r.t $W$, subject to $1^Ty=0,\; \|y\|=1$.
Let
$$\Delta_S = \max\{y^*_i-y^*_j: x_i,x_j\in S\},$$
and similarly
$$\Delta_T = \max\{y^*_i-y^*_j: x_i,x_j\in T\}.$$
Let $\Delta = \max\left\{\Delta_S,\Delta_T\right\}$.
Then
$$ \frac{\alpha}{\beta}\Delta^2 \le \frac{n^2}{2}.$$
\end{lemma}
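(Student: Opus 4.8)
The plan is to sandwich the optimal loss value $f(y^*)$ between two bounds: a lower bound that is forced to be large whenever the within-cluster spread $\Delta$ is large, and an upper bound coming from the optimality of $y^*$ tested against a convenient feasible vector. Comparing the two bounds then yields the claimed inequality $\frac{\alpha}{\beta}\Delta^2 \le \frac{n^2}{2}$. Throughout I would use that the partition is balanced, so $|S|=|T|=n/2$, and I would assume without loss of generality that $\Delta=\Delta_S$, with the maximizing pair $x_p,x_q\in S$ satisfying $y^*_p-y^*_q=\Delta\ge 0$ (the case $\Delta=0$ is trivial).

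For the upper bound I would exploit that $y^*$ is a minimizer, so $f(y^*)\le f(y)$ for any $y$ feasible for the constraints $\mathbf{1}^Ty=0$, $\|y\|=1$. I would take the balanced indicator vector $y$ defined by $y_i=+\tfrac{1}{\sqrt n}$ for $x_i\in S$ and $y_i=-\tfrac{1}{\sqrt n}$ for $x_i\in T$. Because $|S|=|T|$ this satisfies $\mathbf{1}^Ty=0$, and $\|y\|^2=n\cdot\tfrac1n=1$. On this $y$ all same-cluster pairs contribute zero, while each of the $2|S||T|=\tfrac{n^2}{2}$ ordered cross pairs contributes $W_{i,j}(y_i-y_j)^2\le\beta\cdot\tfrac4n$, giving
\begin{equation}
f(y^*)\le f(y)\le \frac{n^2}{2}\cdot\beta\cdot\frac4n = 2\beta n. \notag
\end{equation}

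For the lower bound I would invoke \textbf{property a}: there is a path $x_q=x_{k_1},\ldots,x_{k_l}=x_p$ lying entirely inside $S$ with every consecutive affinity $W_{k_t,k_{t+1}}\ge\alpha$. The crucial observation is that this path stays within the half $S$, which has only $n/2$ vertices, so after discarding repetitions I may assume it is simple and hence $l\le n/2$. Retaining only these path edges (counted in both orientations in $f$) gives $f(y^*)\ge 2\alpha\sum_{t=1}^{l-1}(y^*_{k_{t+1}}-y^*_{k_t})^2$. Since the telescoping sum $\sum_{t}(y^*_{k_{t+1}}-y^*_{k_t})=y^*_p-y^*_q=\Delta$, Cauchy--Schwarz yields $\sum_t(y^*_{k_{t+1}}-y^*_{k_t})^2\ge\frac{\Delta^2}{l-1}\ge\frac{\Delta^2}{n/2}=\frac{2\Delta^2}{n}$, and therefore $f(y^*)\ge\frac{4\alpha\Delta^2}{n}$.

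Combining the two bounds gives $\frac{4\alpha\Delta^2}{n}\le 2\beta n$, which rearranges exactly to $\frac{\alpha}{\beta}\Delta^2\le\frac{n^2}{2}$. I expect the main obstacle to be bookkeeping the constants so that they match the stated $\tfrac{n^2}{2}$ rather than a looser $n^2$: this hinges on two matching factors of two, namely that the loss double-counts each undirected edge (used in the lower bound) and that the confining path lives in a half of size $n/2$ rather than the full $n$ vertices. Keeping the ordered/unordered pair conventions consistent between the two bounds, and justifying the reduction to a simple path of length at most $n/2$, are the only delicate points; the rest is a routine Cauchy--Schwarz and counting argument.
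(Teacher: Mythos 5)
Your proposal is correct and takes essentially the same approach as the paper: sandwich $f(y^*)$ between a lower bound of order $\alpha\Delta^2/n$ coming from a within-cluster path (your Cauchy--Schwarz step is exactly the paper's observation that the telescoping sum of squares is minimized by equidistant points) and an upper bound of order $\beta n$ from testing the feasible balanced indicator vector with entries $\pm 1/\sqrt{n}$. Your explicit reduction to a simple path of at most $n/2$ vertices and your consistent ordered-pair factor-of-two bookkeeping make rigorous two details the paper leaves implicit, but the argument is the same.
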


\begin{proof}
Without loss of generality, assume that $x_1,\ldots,x_\frac{n}{2}\in S$, $x_{\frac{n}{2}+1},\ldots,x_{n}\in T$, and that $y^*_1\le y^*_2 \le \ldots \le y^*_\frac{n}{2}$ and $y^*_{\frac{n}{2}+1}\le y^*_{\frac{n}{2}+2} \le \ldots \le y^*_n$. Also wlog, $\Delta=\Delta_S$.
We begin by lower-bounding $f(y^*)$.
\begin{align}
f(y^*) &= \sum_{i,j} W_{i,j}(y^*_i-y^*_j)^2 \notag\\
&\ge \sum_{x_i,x_j \in S} W_{i,j}(y^*_i-y^*_j)^2 + \sum_{x_i,x_j \in T} W_{i,j}(y^*_i-y^*_j)^2. \notag
\end{align}
Since $G$ is $(\alpha,\beta)$-separated, there exists a path from $y_1$ to $y_{\frac{n}{2}}$ (and likewise from $y_{\frac{n}{2}+1}$ to $y_n$) where the affinity of every pair of consecutive points exceeds $\alpha$. Denote this path by $\Gamma_S$ (resp. $\Gamma_T$), therefore
\begin{align}
f(y^*) &\ge \alpha \left(\sum_{x_{k_i},x_{k_{i+1}} \in \Gamma_S}(y^*_{k_{i+1}}-y^*_{k_i})^2 + \sum_{x_{k_i},x_{k_{i+1}} \in \Gamma_T}(y^*_{k_{i+1}}-y^*_{k_i})^2\right)\notag.
\end{align}
Note that these are telescopic sums of squares. Clearly, such sum of squares is minimized if all $n/2$ points are ordered and equi-distant, i.e., if we divide a segment of length $\Delta$ into $n/2-1$ segments of equal length. Consequently, discarding the second summand,
\begin{align}
f(y^*) &\ge  \alpha \left(\frac{n}{2}-1\right)\left(\frac{\Delta}{n/2-1}\right)^2
=  \frac{2\Delta^2 \alpha}{n-2}
\ge \frac{2\Delta^2 \alpha}{n},\notag
\end{align}

Next, to produce an upper bound, we consider the vector $\bar{y} = \frac{1}{\sqrt{n}}(-1,\ldots,-1,1,\ldots 1)$, i.e., $\bar{y}_i=-\frac{1}{\sqrt{n}}$ for $i\le\frac{n}{2}$, and $\frac{1}{\sqrt{n}}$ otherwise. For this vector,
\begin{align}
f(\bar{y}) &\le \beta \left(\frac{n}{2}\right)^2 \left(\frac{2}{\sqrt{n}}\right)^2
=n\beta.\notag
\end{align}
In summary, we obtain
$$ \frac{2\Delta^2 \alpha}{n} \le f(y^*) \le f(\bar{y}) \le n\beta,$$
Hence
$$\frac{\alpha}{\beta}\Delta^2 \le \frac{n^2}{2}.$$
\end{proof}


\begin{lemma}\label{lem:sep}
Let $y \in \mathbb{R}^n$ be a vector such that $1^Ty=0$, and $\|y\|=1$.
Let $X=S\cup T$, $|S| = |T|=\frac{n}{2}$.
$$\Delta_S = \max\{y_i-y_j: x_i,x_j\in S\},$$
and similarly
$$\Delta_T = \max\{y_i-y_j: x_i,x_j\in T\}.$$
Let $\Delta = \max\left\{\Delta_S,\Delta_T\right\}$.
If $\Delta < \frac{1}{\sqrt{2n}}$, then
$$ \max\{y_i: x_i \in S\} < 0 < \min\{y_i:x_i\in T\}.$$
\end{lemma}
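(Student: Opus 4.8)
The plan is to leverage the two global constraints $\mathbf{1}^T y = 0$ and $\|y\|=1$ to force the two per-cluster averages of $y$ apart, and then to use the smallness of $\Delta$ to pin every coordinate near its cluster average. Introduce the cluster means $\mu_S = \frac{2}{n}\sum_{x_i\in S} y_i$ and $\mu_T = \frac{2}{n}\sum_{x_i\in T} y_i$. Since $|S|=|T|=\frac{n}{2}$, the zero-sum constraint gives $\frac{n}{2}(\mu_S+\mu_T)=0$, so $\mu_T=-\mu_S$; relabeling $S$ and $T$ if needed, I may assume $\mu_S\le 0\le\mu_T$ and set $\mu:=\mu_T=-\mu_S\ge 0$.

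The first step is the elementary observation that each coordinate lies within $\Delta$ of its own cluster mean. For $x_i\in S$ the mean $\mu_S$ sits between $\min_S y_j$ and $\max_S y_j$, whose gap is exactly $\Delta_S\le\Delta$, so $|y_i-\mu_S|\le\Delta$; the same bound holds on $T$. The second step is a variance decomposition driven by $\|y\|=1$: splitting the sum by cluster and expanding about the means (the cross terms vanish because $\mu_S,\mu_T$ are the averages),
$$ 1 = \sum_i y_i^2 = \tfrac{n}{2}\mu_S^2 + \tfrac{n}{2}\mu_T^2 + \sum_{x_i\in S}(y_i-\mu_S)^2 + \sum_{x_i\in T}(y_i-\mu_T)^2. $$
Bounding each of the $n$ within-cluster deviations by $\Delta^2$ and using $\mu_S^2=\mu_T^2=\mu^2$ yields $1 \le n\mu^2 + n\Delta^2$, i.e.\ $\mu^2 \ge \frac{1}{n}-\Delta^2$.

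The final step substitutes the hypothesis $\Delta^2 < \frac{1}{2n}$, which gives $\mu^2 > \frac{1}{n}-\frac{1}{2n} = \frac{1}{2n}$, hence $\mu > \frac{1}{\sqrt{2n}} > \Delta$. Combining this with the first step, for $x_i\in S$ we get $y_i \le \mu_S+\Delta = -\mu+\Delta < 0$, so $\max_S y_i < 0$, and for $x_i\in T$ we get $y_i \ge \mu_T-\Delta = \mu-\Delta > 0$, so $\min_T y_i > 0$, which is precisely the claimed separation.

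I do not expect a genuine obstacle: the proof is a short chain of elementary estimates, and the whole point is simply that the unit-norm and zero-mean constraints cannot be met by a nearly-constant-per-cluster vector unless the two cluster levels straddle zero. The only point needing care is the relabeling remark — the hypotheses are symmetric in $S$ and $T$, so strictly the conclusion holds only after possibly swapping the two sets, and one should state it as ``WLOG $\mu_S\le 0$.'' The constant is also tight in the sense that the threshold must be exactly $\frac{1}{\sqrt{2n}}$ for the inequality $\mu>\Delta$ to survive, which is why the hypothesis is phrased that way rather than with a larger bound.
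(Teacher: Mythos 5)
Your proof is correct and follows essentially the same route as the paper's: both introduce the cluster means $m_S=-m_T$, bound each coordinate within $\Delta$ of its own cluster mean, and exploit $\|y\|^2=1$ via the identity $1 = n m_S^2 + \sum_{x_i\in S}(y_i-m_S)^2 + \sum_{x_i\in T}(y_i-m_T)^2$ to get $m_S^2 \ge \frac{1-n\Delta^2}{n}$, so that $\Delta < \frac{1}{\sqrt{2n}}$ forces $|m_S| > \Delta$ and hence the sign separation. The only differences are cosmetic (you phrase the key step as a variance decomposition, the paper expands the same sum directly), and your caveat about relabeling $S$ and $T$ corresponds exactly to the paper's own ``without loss of generality'' assumption.
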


\begin{proof}
Let
$$m_S = \frac{2}{n}\sum_{x_i\in S}y_i,\;
m_T = \frac{2}{n}\sum_{x_i\in T}y_i.$$

Since $1^Ty=0$, we have $m_S=-m_T$.
Without loss of generality, assume that $m_S < 0 < m_T$.
For every $y_i$ such that $x_i \in S$,
\begin{equation}  \label{eq:concentration}
(y_i-m_S)^2 \le \Delta^2.
\end{equation}
Similarly, for every $y_i$ such that $x_i \in T$,
$$(y_i+m_S)^2 = (y_i-m_T)^2 \le \Delta^2.$$
This gives
\begin{align}
n\Delta^2 &\ge \sum_{x_i \in S}(y_i-m_S)^2 + \sum_{x_i \in T}(y_i+m_S)^2 \notag\\
&= \sum_{x_i\in S \cup T}y_i^2 -2m_S\sum_{x_i\in S}y_i + 2m_S\sum_{x_i\in T}y_i +nm_S^2\notag\\
&= 1 -2m_S\cdot m_S\frac{n}{2} + 2m_S\cdot -m_S\frac{n}{2} +nm_S^2\notag\\
&=1 - nm_S^2, \notag
\end{align}
which gives
\begin{equation}
m_S^2 \ge \frac{1-n\Delta^2}{n}.\notag
\end{equation}
In order to obtain the desired result, i.e., that $ \max\{y_i: x_i \in S\} < 0 < \min\{y_i:x_i\in T\}$, it therefore remains to show that for a sufficiently small $\Delta$, by~\eqref{eq:concentration}, $m_S + \Delta < 0$ (this will also yield $m_T - \Delta > 0$).
Hence, we will require
$$\frac{1-n\Delta^2}{n} \ge \Delta^2, $$
which holds for $\Delta < \frac{1}{\sqrt{2n}}$.
\end{proof}

\subsection{Proof of Theorem~\ref{thm:vc}}


\begin{proof}
To determine the VC-dimension of $\mathcal{F}^\text{spectral clustering}_n$ we need to show that there exists a set of $m=n/10$ points (assuming for simplicity that $n$ is divisible by 10) that is shattered by spectral clustering. By Lemma~\ref{lem:a-b}, there exists a set of $m$ points $\tilde{X} \subseteq \mathbb{R}^d$ ($d \ge 3)$ so that for any dichotomy of $\tilde{X}$ there exists a set $X \supset \tilde X$ of $n = 10m$ points, with a balanced partition $X=S\cup T$ that respects the dichotomy of $\tilde{X}$, and whose points satisfy properties \textbf{a} and \textbf{b} of Lemma~\ref{lem:a-b} with $0 \le b < 1$.

Consider next the complete graph $G=(V,W)$ whose vertices $v_i \in V$ correspond to point $x_i$ and the affinity matrix $W$ is set with the standard Gaussian affinity  $W_{i,j} = \exp\left(-\frac{\|x_i-x_j \|^2}{2\sigma^2}\right)$, where the value of $\sigma$ will be provided below.  It can be readily verified that, due to properties \textbf{a} and \textbf{b}, $G$ is $(\alpha,\beta)$-separated, where
$$ \alpha = \exp\left(-\frac{b^2}{2\sigma^2}\right), \;  \beta = \exp\left(-\frac{1}{2\sigma^2}\right).$$
.

Let $y^*$ be the second-smallest eigenvector of the graph Laplacian matrix for $G$, i.e., the minimizer of
$$f(y) = \sum_{i,j} W_{i,j}(y_i-y_j)^2, \quad
\text{s.t.} \quad 1^Ty=0,\; y^Ty=1.$$
By Lemma~\ref{lem:ineq}, since $G$ is ($\alpha,\beta$)-separated, $\Delta$, i.e, the spread of the entries of $y^*$ for the partition $S \cup T$, should satisfy
$$\frac{\alpha}{\beta}\Delta^2 \le \frac{n^2}{2}.$$
Notice that
$$\frac{\alpha}{\beta} = \exp\left(\frac{1-b^2}{2\sigma^2} \right),$$
allowing us to make $\Delta$ arbitrarily small by pushing the scale $\sigma$ towards 0\footnote{We note that Theorem~\ref{thm:vc} also holds with constant $\sigma$, in which case we can instead uniformly scale the point locations of $\tilde X$ and respectively $X$.}. In particular, we can set $\sigma$ so as to make $\Delta$ satisfy $\Delta < 1/\sqrt{2n}$. Therefore, by lemma~\eqref{lem:sep}, thresholding $y^*$ at 0 respects the partition of $X$, and hence also the dichotomy of $\tilde{X}$.

In summary, we have shown that any dichotomy of $\tilde{X}$ can be obtained from a second-smallest eigenvector of some graph Laplacian of $n$ points. Hence the VC dimension of $\mathcal{F}$ is at least $m = n/10$.
\end{proof}


\section {Technical details}\label{app:tech}

\begin{table}[t]
\centering
\scalebox{0.8}{
\begin{tabular}{ |l|l|l|}
  \hline
  & Siamese net & SpectralNet \\\hline\hline
  MNIST & ReLU, size = 1024 & ReLU, size = 1024\\
  & ReLU, size = 1024 & ReLU, size = 1024\\
  & ReLU, size = 512  & ReLU, size = 512\\
  & ReLU, size = 10   & tanh, size = 10\\
  & - & orthonorm \\
  \hline
  Reuters & ReLU, size = 512 & ReLU, size = 512\\
  & ReLU, size = 256 & ReLU, size = 256\\
  & ReLU, size = 128 & tanh, size = 4  \\
  & - & orthonorm \\
  \hline
\end{tabular}
}
\caption{Siamese net and SpectralNet architectures in the MNIST and Reuters experiments.}
\label{tab:nets_mnist}
\end{table}

\begin{table}
\centering
\scalebox{0.8}{
\begin{tabular}{ |l|l|l|l|l|}
  \hline
                    & MNIST       & MNIST    & Reuters & Reuters    \\
                    & Siamese     & SpectralNet & Siamese     & SpectralNet \\\hline\hline
  Batch size        & 128         & 1024         & 128         & 2048         \\\hline
  Ortho. batch size & -           & 1024         & -           & 2048          \\\hline
  Initial LR        & $10^{-3}$   &$10^{-3} $    & $10^{-3}$   & $5\cdot 10^{-5}$   \\\hline
  LR decay          & .1          & .1           & .1          & .1              \\\hline
  Optimizer         & RMSprop     & RMSprop      & RMSprop     & RMSprop     \\\hline
  Patience epochs   & 10          & 10           & 10          & 10           \\\hline
\end{tabular}
}
\caption{Additional technical details.}
\label{tab:add}
\end{table}

For $k$-means we used Python's sklearn.cluster; we used the default configuration (in particular, 300 iterations of the algorithm, 10 restarts from different centroid seeds, final results are from the run with the best objective).
To perform spectral clustering, we computed an affinity matrix $W$ using~\eqref{eq:gaussian}, with the number of neighbors set to 25 and the scale $\sigma$ set to the median distance from each point to its 25th neighbor. Once $W$ was computed, we took the $k$ eigenvectors of $D-W$ corresponding to the smallest eigenvalues, and then applied $k$-means to that embedding. The $k$-means configuration was as above. In our experiments, the loss~\eqref{eq:spectralNetLoss} was computed with a factor of $\frac{1}{m}$ rather than $\frac{1}{m^2}$, for numerical stability. The architectures of the Siamese net and SpectralNet are described in Table~\ref{tab:nets_mnist}. Additional technical details are shown in Table~\ref{tab:add}.

The learning rate policy for all nets was determined by monitoring the loss on a validation set (a random subset of the training set); once the validation loss did not improve for a specified number of epochs (see \textit{patience epochs} in Table~\ref{tab:add}), we divided the learning rate by 10 (see \textit{LR decay} in Table~\ref{tab:add}). Training stopped once the learning rate reached $10^{-8}$.
Typical training  took about 100 epochs for a Siamese net and less than 20,000 parameter updates for SpectralNet, on both MNIST and Reuters.

In the MNIST experiments, the training set for the Siamese was obtained by pairing each data point with its two nearest neighbors (in Euclidean distance). During the training of the spectral map, we construct the batch affinity matrix $W$ by connecting each point to its nearest two neighbors in the Siamese distance. The scale $\sigma$ in~\eqref{eq:gaussian} was set to the median of the distances from each point to its nearest neighbor.

In the Reuters experiment, we obtained the training set for the Siamese net by pairing each point from that set to a random point from its 100 nearest neighbors, found by approximate nearest neighbor algorithm\footnote{\url{https://github.com/spotify/annoy}}.
To evaluate the generalization performance, the Siamese nets were trained using training data only.
The scale $\sigma$ in~\eqref{eq:gaussian} was set globally to the median (across all points in the dataset) distance from any point to its 10th neighbor.

Finally, we used the validation loss to determine the hyper-parameters. To demonstrate that indeed the validation loss is correlated to clustering accuracy, we conducted a series of experiments with the MNIST dataset, where we varied the net architectures and learning rate policies; the Siamese net and Gaussian scale parameter $\sigma$ were held fixed throughout all experiments. In each experiment, we measured the loss on a validation set and the clustering accuracy (over the entire data). The correlation between loss and accuracy across these experiments was -0.771. This implies that hyper-parameter setting for the spectral map learning can be chosen based on the validation loss, and a setup that yields a smaller validation loss should be preferred. We remark that we also use the convergence of the validation loss to determine our learning rate schedule and stopping criterion.


\end{document}